\newcolumntype{P}[1]{>{\centering\arraybackslash}p{#1}}
\newcommand{\DON}{{\rm DeepONet}}
\definecolor{clemson-orange}{RGB}{234,106,32}
\definecolor{highlight-orange}{RGB}{255,150,150}
\definecolor{chicago-maroon}{RGB}{128,0,0}
\definecolor{cincinnati-red}{RGB}{190,0,0}
\definecolor{soft-cyan}{RGB}{68,85,90}
\definecolor{firebrick}{RGB}{178,34,34}
\definecolor{crimson}{RGB}{220,20,60}
\definecolor{cerrulean}{rgb}{0.165,0.322,0.745}
\definecolor{jaam}{rgb}{0.45,0.0,0.45}
\declaretheoremstyle[
   headfont=\bfseries, 
   bodyfont=\normalfont\itshape, spaceabove=10pt,
   spacebelow=10pt]{mystyle}
\theoremstyle{mystyle}
\theoremstyle{definition}
\newtheorem{theorem}{Theorem}[section]
\newtheorem*{theorem*}{Theorem}
\newtheorem{lemma}[theorem]{Lemma}
\newtheorem{proposition}[theorem]{Proposition}
\newtheorem{definition}{Definition}
\newtheorem{assumption}{Assumption}
\newif\ifsolutions \solutionstrue
\def\final{0}
\newcommand{\reviewer}[3]{
  \expandafter\newcommand\csname #1\endcsname[1]{
    \ifthenelse{\equal{\final}{1}} {
      \textcolor{#3}{}
    } {
      \textcolor{#3}{\begin{center} \textbf{#2} ##1 \end{center}}
    }
  }
}
\renewcommand{\ip}[2]{\left\langle#1,#2\right\rangle}
\newcommand{\relu}{\mathop{\mathrm{ReLU}}}
\newcommand{\Radm}{{\bf \mathcal{R}}_{\rm m}}
\def\1{\bm{1}}
\newcommand{\E}{\mathbb{E}}
\def\vb{{\bm{b}}}
\def\vf{{\bm{f}}}
\def\vt{{\bm{t}}}
\def\vv{{\bm{v}}}
\def\vw{{\bm{w}}}
\def\vx{{\bm{x}}}
\def\vy{{\bm{y}}}
\def\vz{{\bm{z}}}
\def\mB{{\bm{B}}}
\def\mT{{\bm{T}}}
\def\mX{{\bm{X}}}
\DeclareMathAlphabet{\mathsfit}{\encodingdefault}{\sfdefault}{m}{sl}
\SetMathAlphabet{\mathsfit}{bold}{\encodingdefault}{\sfdefault}{bx}{n}
\def\gC{{\mathcal{C}}}
\def\gD{{\mathcal{D}}}
\def\gF{{\mathcal{F}}}
\def\gG{{\mathcal{G}}}
\def\gH{{\mathcal{H}}}
\def\gL{{\mathcal{L}}}
\def\gS{{\mathcal{S}}}
\def\gW{{\mathcal{W}}}
\newcommand{\x}{{\vx}}
\newcommand{\y}{{\vy}}
\newcommand{\z}{{\vz}}
\renewcommand{\v}{{\vv}}
\newcommand{\f}{{\vf}}
\renewcommand{\t}{{\vt}}
\newcommand{\w}{{\vw}}
\renewcommand{\b}{{\vb}}
\newcommand{\bb}{\mathbb}
\newcommand{\R}{\bb R}
\newcommand{\B}{\mB}
\newcommand{\T}{\mT}
\newcommand{\cG}{\gG}
\newcommand{\cS}{\gS}
\newcommand{\cF}{\gF}
\newcommand{\cW}{\gW}
\newcommand{\C}{\gC}
\title{Towards Size-Independent Generalization Bounds for Deep Operator Nets}
\author{\name Pulkit Gopalani\thanks{This work was done when the author was at the Department of Electrical Engineering, Indian Institute of Technology Kanpur (IIT Kanpur), India.}  \email gopalani@umich.edu \\
      \addr Department of Computer Science \& Engineering\\
      University of Michigan
      \AND
      \name Sayar Karmakar  \email sayarkarmakar@ufl.edu \\
      \addr Department of Statistics\\
      University of Florida
      \AND
      \name Dibyakanti Kumar  \email dibyakanti.kumar@postgrad.manchester.ac.uk \\
      \addr Department of Computer Science\\
      The University of Manchester
      \AND
      \name Anirbit Mukherjee  \email anirbit.mukherjee@manchester.ac.uk \\
      \addr Department of Computer Science\\
      The University of Manchester
      }
\begin{document}

\maketitle

\begin{abstract} 
In recent times machine learning methods have made significant advances in becoming a useful tool for analyzing physical systems. A particularly active area in this theme has been ``physics-informed machine learning'' which focuses on using neural nets for numerically solving differential equations. In this work, we aim to advance the theory of measuring out-of-sample error while training DeepONets -- which is among the most versatile ways to solve P.D.E systems in one-shot. Firstly, for a class of DeepONets, we prove a bound on their Rademacher complexity which does not explicitly scale with the width of the nets involved. Secondly, we use this to show how the Huber loss can be chosen so that for these DeepONet classes generalization error bounds can be obtained that have no explicit dependence on the size of the nets. The effective capacity measure for DeepONets that we thus derive is also shown to correlate with the behavior of generalization error in experiments.
\end{abstract}


\section{Introduction}


Deep learning has recently emerged as a competitive way to solve partial differential equations (P.D.Es) numerically. We note that the idea of using nets to solve P.D.Es dates back many decades \cite{Lagaris_1998} \cite{broomhead1988RBF}. In recent times this idea has gained significant momentum and ``AI for Science'' \cite{karniadakis2021piml} has emerged as a distinctive direction of research. Some of the methods at play for solving P.D.Es neurally \cite{E_2021} are the Physics Informed Neural Networks (PINNs) paradigm \cite{raissi2019physics} \cite{lawal2022prototype}, ``Deep Ritz Method'' (DRM) \cite{yu2018deepdrm}, ``Deep Galerkin Method'' (DGM) \cite{sirignano2018dgm} and many further variations that have been developed of these ideas, \cite{kaiser2021datadriven, erichson2019physicsinformed, Wandel_2021, li2022learning, salvi2022neural}. 

These different data-driven methods of solving the P.D.Es can broadly be classified into two kinds, {\bf (1)} ones which train a single neural net to solve a specific P.D.E. and {\bf (2)} operator methods -- which train multiple nets in tandem to be able to solve in ``one shot''  a family of differential equations, with a fixed differential operator and different ``source / forcing'' functions. The operator net was initiated in \cite{chen_chen} and its most popular deep net version, the DeepONet, was introduced in \cite{Lu_2021}. An unsupervised form of this idea was introduced in \cite{wang2021pdeOnet}. We note that the operator methods are particularly useful in the supervised setup when the underlying physics is not known -- as is the setup in this work -- and state-of-the-art approaches of this type, can be seen in works like  \cite{mishra2023convolutional}.


As an explicit example of using DeepONet in the supervised setup, consider solving a pendulum O.D.E., $\dv{(y,v)}{t} =  (v, -k \cdot \sin(y) + f(t)) \in \R^2$ for different forcing functions $f(t)$. Then using sample measurements of valid $(f,y)$ tuples, a DeepONet setup can be trained only once, and then repeatedly be used for inference on new forcing functions $f$ to estimate their corresponding solutions $y$. This approach is fundamentally unlike traditional numerical methods where one needs to run the optimization algorithm afresh for every new source function. In a recent study \cite{lu_fair}, the authors showed how the DeepONet setup -- that we focus on in this work -- has significant advantages over other competing neural operators, like the FNO \cite{li2020fourier}, in solving various differential equations of popular industrial use.

As a testament to the foundational nature of the idea, we note that over the last couple of years, several variants of DeepONets have been introduced, like, \cite{tripura2023wavelet, goswami2022deep, zhang2022multiauto,hadorn2022shift,park2023sequential,de2023coupled,tan2022enhanced,xu2023transfer,lin2023b}.  Different implementations of neural operators have been demonstrated to be useful for various scientific applications, like for predicting crack shapes in brittle materials \cite{vdon}, for fluid flow in porous materials \cite{choubineh2023fourier}, for simulating plasmas \cite{gopakumar2020image}, for seismic wave propagation \cite{lehmann2023fourier}, for weather modeling \cite{kurth2023fourcastnet} etc. 

In light of this burgeoning number of applications, we posit that it becomes critical that the out-of-sample performance of neural operators be mathematically understood. Towards this goal, in this work, we prove that the generalization error of one of the most versatile forms of operator learning i.e DeepONets, can become independent of the number of training parameters. We note that this is a significant improvement over the best known generalization bounds for DeepONets, Theorem 5.3 in \cite{sidd_don}, which grow exponentially in the number of parameters. We are able to obtain this insight via analyzing the Rademacher complexity of DeepONets. That we can recover here the usual neural net like generalization bound of the form $\frac{\text{\rm a capacity measure}}{\sqrt{\rm sample-size}}$  is particularly interesting because in here the predictor, the DeepONet, is a non-Lipschitz function, while nets are Lipschitz functions and that is critical to how the standard analysis is carried out for obtaining generalization error bounds.

\paragraph{Motivations for Understanding Rademacher Complexity} We recall that typically there is a vast lack of information about the distribution of the obtained models in any stochastic training setup. Corresponding to a stochastic training algorithm, say ${\rm ALG}$, that is using a $m-$ sized dataset $\gS_m$ to search over a class of predictors $\cal H$, there is scarce mathematical control possible over the distribution of its output random variable -- the random predictor thus obtained, say ${\rm ALG}({\gS}_m,{\cal H})$. Hence there is an almost insurmountable challenge in being able to control the quantity we would ideally like to bound, the out-of-sample risk of the obtained predictor i.e. $R \left( \ell \circ {\rm ALG}({\gS}_m,{\gH}) \right)$ - where $R$ denotes the expectation w.r.t the true data distribution, say ${\cal D}$, of a loss $\ell$ evaluated on the predictor ${\rm ALG}({\cal S}_m,{\cal H})$. 

The fundamental idea that takes us beyond this conundrum is to relax our goals from aiming to control the above to aiming to control only the data averaged worst  (over all possible predictors) ``generalization gap'' between the population and the empirical risk i.e, $\E_{\substack{{\cal S}_m \sim {\cal D}^m}} \left [ \sup_{h \in {\cal H}} \left (  \hat{R}_m (\ell \circ h) - R (\ell \circ h) \right )  \right ]$ -- where corresponding to a choice of loss $\ell$, $\hat{R}_m (\ell \circ h)$ is the empirical estimate over the data sample ${\cal S}_m$ of the risk of $h$ and $R(\ell \circ h)$ is the population risk of $h$ over the data distribution ${\cal D}$.  The importance of the statistical quantity, Rademacher complexity \cite{bartlett} of the function class $\ell \circ {\cal H}$ i.e. $\E_{\vz_1,\ldots,\vz_m \sim {\cal D}^m}\left(\E_{\epsilon} \left[\sup_{h \in \mathcal{H}} \frac{1}{m} \sum_{i=1}^m \epsilon_i \ell \circ h (\z_i)\right]\right)$ (where $\epsilon_i \sim \pm 1$ are Bernoulli random variables) stems from the fact that it is what bounds this aforementioned measure of the generalization gap.

It is to be noted that trying to control the above frees us from the specifics of any particular M.L. training algorithm being used (of which there is a myriad of heuristically good options) to find ${\rm arginf}_{h \in {\cal H}} R(\ell \circ h)$. But on the other hand, by analyzing the Rademacher complexity we gain insight into how the choice of the hypothesis class $\cal H$, the choice of the loss $\ell$, and the data distribution $\cal D$ interact to determine the ability to find  ${\rm arginf}_{h \in {\cal H}} R(\ell \circ h)$ via empirical estimates. Suppose that there exists some constant $\gC(\gH, \gD)$ --- a capacity measure --- such that $\E_{\substack{{\cal S}_m \sim {\cal D}^m}} \left [ \sup_{h \in {\cal H}} \left (  \hat{R}_m (\ell \circ h) - R (\ell \circ h) \right )  \right ]$ is upper-bounded by $\frac{2 \cdot \gC(\gH,\gD)}{\sqrt{n}}$, then one can conclude that while training with $n \geq \left( \frac{2 \cdot \gC(\gH,\gD)}{\epsilon} \right )^2$ samples the data-averaged worst-case generalization error for the predictor obtained is at most $\epsilon$, for any arbitrarily small $\epsilon > 0$. This motivates one of the uses of Rademacher complexity i.e to provide such estimates on the number of samples required to achieve a target test accuracy. In particular, in cases where this capacity does not scale with the number of parameters in the hypothesis class $\gH$, one may begin to explain why certain over-parameterized models might be good spaces for the learning task. Such understanding becomes immensely helpful when $\cal H$ is very complicated, as is the focus here, that of ${\cal H}$ being made of DeepONets. In Section~\ref{app:rltd_radmth}, we review some of the recent advances in the Rademacher analysis of neural networks.

We crucially note (Theorem 4.13, \cite{cs229m}) that any condition on $m, \ell, {\cal H}$ and $\cal D$ that makes Rademacher complexity small is a condition which when true it becomes reasonable to expect that the empirical and the population risk could also be close for any predictor in the class $\cal H$. 

In this work, we will compute the Rademacher complexity of appropriate classes of DeepONets and use it to give the first-of-its-kind bound on their generalization error which does not explicitly scale with the number of parameters. Generalization bounds that do not scale with the size of the nets can be seen as a step towards explaining the success of overparameterized architectures for that learning task. Further, our experiments will demonstrate that the complexity measures of DeepONets as found by our Rademacher analysis indeed correlate to the true generalization gap, over varying sizes of the training data.

\subsection{Overview of Training DeepONets ~\& Our Main Results}\label{sec:over}
Following \cite{Lu_2021}, we refer to the schematic in Fig.\ref{fig:my_label} below for the DeepONet architecture, 
\begin{figure}[h!]
    \centering
    \includegraphics[width=0.6\textwidth]{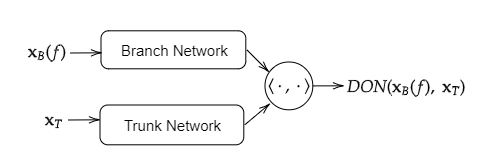}
    \caption{A Sketch of the DeepONet ( ``DON'') Architecture}
    \label{fig:my_label}
\end{figure}

In the above diagram, the Branch Network and the Trunk Network are neural nets with a common output dimension. $\x_B(f) \in \R^{d_1}$, the input to the branch net is an ``encoded'' version of a function $f$ i.e. a discretization of $f$ onto a $d_1-$sized grid of ``sensor points'' in its input domain. $\x_T \in \R^{d_2}$ is the trunk input. If the activation is $\sigma$ at all layers and the branch net and the trunk net's layers are named $\mB_k, k=1,2,\ldots,q_B$ and $\mT_k, k=1,2,\ldots,q_T$ respectively, then the above architecture implements the map, 
\begin{align}\label{def:informal} 
    \nonumber \R^{d_1} \times \R^{d_2} \ni (\x_B(f), \x_T) \mapsto \DON(\x_B(f), \x_T) &\coloneqq \\ \Bigg ( \underbrace{\mB_{q_B}(\sigma(\mB_{q_{B}-1}(\ldots \sigma(\mB_1(\x_B(f)))\ldots)))}_{=\vf_B(\vx_B(f))} \Bigg )^\top  \Bigg(& \underbrace{{\mT_{q_T}(\sigma(\mT_{q_{T}-1}(\ldots\sigma(\mT_1(\x_T))\ldots)))}}_{ = \vf_T(\vx_T)}  \Bigg )
\end{align}

In above the number of rows of the matrix $\mB_{q_B}$ and $\mT_{q_T}$ need to be the same for the inner-product to be defined. For a concrete example of using the above architecture, consider the task of solving the pendulum O.D.E from the previous section, $\dv{(y,v)}{t} =  (v, -k \cdot \sin(y) + f(t)) \in \R^2$. For a fixed initial condition, here the training/test data sets would be $3-$tuples of the form, $(\x_{B}(f), x_{T}, y)$ where $y \in \R$ is the angular position of the pendulum at time $ t = \tau$ for the forcing function $f$. Typically $y$ is a standard O.D.E. solver's approximate solution. Given $m$ such training data samples, the $\ell_2$ empirical loss would be, 
\begin{align}\label{pendulumloss}
\gL  \coloneqq \frac{1}{2m} \sum_{i=1}^{m} \left ( y_i - \DON (\x_{B}(f_i),\tau_{i})\right )^2 = \frac{1}{2m} \sum_{i=1}^{m} \left ( \gG(f_i,\tau_{i}) - \DON (\x_{B}(f_i),\tau_{i})\right )^2 .
\end{align}
In above $\gG$ is the solution operator for this O.D.E. that the O.D.E. solver can be imagined to be simulating. The rationale for this loss function above originates from the universal approximation property of DeepONets which we have reviewed as Theorem \ref{thm:sid}. Going beyond such approximation theorems, we state the results that we prove here about the risk bounds in this novel learning setup.

\begin{theorem*}[Informal Statement of Theorem~\ref{thm:absdon}]\label{thm:inf_main1}
    Consider a class of DeepONets, with absolute value activation, whose branch and trunk networks are both of depth $n$ (i.e. $q_B = q_T = n$ in equation \ref{def:informal}) and suppose that the squared norms of the inputs to them are bounded in expectation by $M_{x,B}$ and $M_{x,T}$ respectively. Then the average Rademacher complexity, for training with samples of size $m$, is bounded by,
        \begin{align}\label{reg}
    { \mathcal{ O}}{\left(\frac{2^{n-1} \C_{n,n-1}}{\sqrt{m}} \,\left(\prod_{j=2}^{n-1}\C_{-j,-j}\right){M_{x,B} M_{x,T}}\right)}
    \end{align}
     where the constants $\mathcal{C}_{n,n-1}$ and $\mathcal{C}_{-k,-k}, ~k = 2,3,\ldots,n-1$ are defined so that the weight matrices of the Branch Network and the Trunk Network of all the DeepONets in the class satisfy the following bounds,
    \begin{align}\label{reg}
        \nonumber \sum_{k_1=1}^{b_{-1}} \sum_{k_2=1}^{t_{-1}} \abs{\bigg [ \sum_{j=1}^p (\B_{n,j} \T_{n,j}^\top) \bigg ] _{k_1, k_2}} \cdot \norm{\B_{n-1,k_1}} \cdot \norm{\T_{n-1,k_2}}  &= \C_{n,n-1}, \\
        \sup_{\substack{(\vv,\vw)}} \sum_{j_1 = 1}^{b_{-k}} \sum_{j_2 = 1}^{t_{-k}} \abs{(\vv \vw^\top)_{j_1,j_2}} \cdot \norm{\B_{n-k,j_1}} \norm{\T_{n-k,j_2}} &= \C_{-k,-k}\quad \quad \forall k=2,\ldots,n-1,
    \end{align} 
    where in each sum, above $\vv, \w$ are on the unit spheres of the same dimensionality as the number of rows in $\mB_{n-k}$ and $\mT_{n-k}$ respectively and $b_{-k}, t_{-k}$ represent the number of rows in the weight matrices $\B_{n-k}$ and $\T_{n-k}$ respectively. (For any branch weight matrix say $\mB_p$, in above we have denoted its $j$-th row as $\B_{p,j}$ and similarly for the trunk.) 
\end{theorem*}

Towards making the above measures computationally more tractable, in Appendix \ref{sec:simplereg} we have shown that $\forall k=2,\ldots,n-1$ one can choose an upperbound $\tilde{\C}_{-k,-k}$ in place of $\C_{-k,-k}$ where,
\begin{align}\label{eq:cup} 
\tilde{\C}_{-k,-k} = \norm{\mX} \text{ for } \mX \in \R^{b_{-k} \times t_{-k}} \text{ with } \mX_{j_1,j_2} \coloneqq \norm{\B_{n-k,j_1}}\cdot \norm{\T_{n-k,j_2}}
\end{align}
In Section~\ref{sec:experiments}, we undertake an empirical study on a particular component of the generalization bound, represented as $\frac{\C_{n,n-1}}{\sqrt{m}} ,\left(\prod_{j=2}^{n-1}\tilde{\C}_{-j,-j}\right)$, in relation to the generalization gap to assess the correlation between these factors.

Having proven the key theorem above, the following generalization bound with a modified loss function for DeepONets follows via standard arguments about Rademacher complexity, 

\begin{theorem*}[Informal Statement of Theorem~\ref{thm:genbound}]
    Considering the same class of DeepONets as in Theorem~\ref{thm:inf_main1} and using the Huber loss $\ell_{H,\delta} (x) \coloneqq
        \begin{cases}
            \frac{1}{2} x^2 &\quad \mbox{for } \abs{x}\leq \delta \\
            \delta \cdot (\abs{x} - \frac{1}{2} \delta) &\quad \mbox{for } \abs{x} > \delta
        \end{cases}  $ with $\delta = \left(\frac{1}{2}\right)^{n-1}$ as the loss function. The expectation over data of the supremum of the generalization error over the above class of DeepONets can then be bounded by,
    \[\mathcal{O} \left( \frac{\C_{n,n-1}}{\sqrt{m}} \,\left(\prod_{j=2}^{n-1}\C_{-j,-j}\right)\, {M_{x,B} M_{x,T}} \right)
    \]
    where $\C_{n,n-1}$ and $\C_{-j,-j}$ are the constants defined in Theorem~$\ref{thm:inf_main1}$.
\end{theorem*}

To the best of our knowledge, there is no general principle which determines when the worst case generalization gap of a learning scenario scales like $\frac{\gC}{\sqrt{m}}$ for some ``capacity'' function $\gC$ which is entirely determined by the data distribution and the norms of the parameters allowed in the predictor class. We recall that this is the familiar form of this bound from calculations done with neural nets, as reviewed in Appendix \ref{app:rltd_radmth}. We posit that the reappearance of this form of the bound for DeepONets could not have been taken for granted without the involved calculation as done here to prove the above theorem.

{\em Secondly,} we note that there is no explicit dependence on the widths of the DeepONet of the effective capacity measure of $\gC$ obtained above, once the depth has been fixed. In line with how terminology was developed for standard neural nets, as reviewed in Section \ref{app:rltd_radmth}, we can say that thus we have obtained size-independent generalization bounds - even for DeepONets and also while not making any assumptions on the target operator $\gG$ - an instance of which was seen in equation \ref{pendulumloss}.


{\em Lastly,} we note that in \cite{payel2023physics}, the authors had recently demonstrated empirically the effectiveness of using Huber loss for training neural networks to solve P.D.Es. Albeit for a different setup, that of solving P.D.E. systems, via the above theorem, we provide a theoretical justification for why Huber losses could be well suited for such tasks, and in here we are also able to derive a good value of $\delta$ --while this hyperparameter choice in \cite{payel2023physics} was not grounded in any theory. 

In Appendix~\ref{app:varydelta} we give an experimental study of how lowering the value of $\delta$ indeed helps lower both the generalization error as well as the test error. Further, we will give experimental evidence (Figures \ref{fig:plot_huber} and \ref{fig:l2_500_4000}) that the capacity function derived above indeed has high correlation with the variations in the generalization error over different experiments at different training data size.


In light of the above, we note the following \textit{practical applicability} and \textit{qualitative significance} of our bounds on the Rademacher complexity and generalization error of DeepONets. 


\paragraph{Choosing the Loss Function} Theorem~\ref{thm:genbound} suggests that training on the Huber loss function could lead to better generalization. We give a performance demonstration in Appendix~\ref{app:tanh} for Huber loss function at $\delta = \frac{1}{2^{n-1}}$ for DeepONets using $\tanh$ activation (although our above theorem does not capture this choice of activation) and we note that it gives $2$ orders of magnitude lower test error than for the same experiment using $\ell_2$ loss.

Additionally, note that, DeepONets without biases and with positive homogeneous activation, are invariant to the scaling of the branch net's layers, $\mathbf{B}_k$,and trunk net's layers, $\mathbf{T}_k$, by any $\mu_k \mbox{ and } \lambda_k > 0$ respectively $\forall k$ s.t $\prod_k (\mu_k \cdot  \lambda_k) = 1$. {\it This is a larger symmetry than for usual nets but our complexity measure mentioned above is also invariant under this combined scaling.} This can be seen as being strongly suggestive of our result being a step in the right direction. 
    
\paragraph{Explaining Overparameterization} Since our generalization error bound has no explicit dependence on the width or the depth (or the number of parameters), it constitutes a step towards explaining the benefits of overparameterization in this new setup.   

\section{Related Works}\label{app:rltd_radmth}


Over the past few years, many novel generalization bounds for standard neural nets have been established. Many of these works have computed bounds on Rademacher complexity for various classes of nets, to show how different norm combinations of the involved weight matrices affect generalization performance, \cite{sellke2023size}, \cite{rgs}, \cite{telgarsky}, \cite{behnam}. For shallow neural nets such methods have also been useful in explaining the benefits of dropout 
\cite{arora2020dropout} and overparameterization \cite{neyshabur2018the}. However, for state-of-the-art neural nets, other approaches \cite{dziugaite2017computing},
\cite{arora2018stronger}, \cite{neyshabur2018pacbayesian}, \cite{thesis} have sometimes given tighter bounds while using bounding expressions that are computationally very expensive. A thorough discussion of the reach and limitations of these various bounds can be found in \cite{vaishnav}.


In \cite{kumar24investigating}, the authors investigate the behavior of the $\ell_2$-distance between the true solution and the PINN surrogate, building on the work of \cite{mishra2023estimates}. While the study establishes an upper bound on the $\ell_2$-distance, this result alone is insufficient to guarantee successful training. For training to be effective, it is also necessary to demonstrate that the empirical risk converges to the population risk, which can be ensured using Rademacher complexity-based bounds.


\subsection{Comparison to \cite{sidd_don}}\label{subsec:related}

{\em Firstly,} we note that none of the theorems presented here depend on the results in  \cite{sidd_don}. The proof strategies are entirely independent. {\em Secondly,} to the best of our knowledge, Theorem 5.3 in \cite{sidd_don} is the only existing result on generalization error bounds for DeepONets. However, their bound has an explicit dependence on the total number of parameters (the parameter $d_{\theta}$ there) in the DeepONet. Such a bound is not expected to explain the benefits of overparameterization, which is one of the key features of modern deep learning \citep{dar2021farewell, yang2020rethinking}. 

We note that for usual implementations for DeepONets, where depths are typically small and the layers are wide, for a class of DeepONets at any fixed value of our complexity measure i.e. $\C_{n,n-1}\left(\prod_{j=2}^{n-1}\C_{-j,-j}\right)$, a generalization error bound based on our Rademacher complexity bound in Theorem \ref{thm:absdon} will be smaller than the one in \cite{sidd_don} that scales with the total number of parameters. 

Further, as pointed out in our second main result, Theorem \ref{thm:genbound}, our Rademacher bound lends itself to an entirely size-independent generalization bound when the loss is chosen as a certain Huber loss.

\paragraph{Notation} Given any $U \subset \R^n$, denote as $L^2(U)$ the set of all functions $f : U \rightarrow \R$ s.t $\int_U f^2(\x) \dd{\mu}(\x) < \infty$ where $\mu$ is the standard Lebesgue measure on $\R^n$. And we denote as $C(U)$, the set of all real valued continuous functions on $U$. The unit sphere in $\R^k$ is denoted by $S^{k-1} \coloneqq \left \{ \x \in \R^k \,\mid\, \norm{x}_2 = 1 \right \}.$ For any matrix $\mathbf{A},\, \norm{\mathbf{A}} = \sup_{\mathbf{v}\neq0} \frac{\norm{\mathbf{A}\mathbf{v}}_2}{\norm{\mathbf{v}}_2}$ denotes its spectral norm. For any bounded set $S \subset \R^n$, $U(S)$ denotes the uniform distribution over that set $S$.

\section{The Mathematical Setup}\label{sec:math_setup}

Firstly, we recall the definition of the general DeepONet architecture from \cite{Lu_2021}, which will be our focus in this work. 

\begin{definition}[\bf{A DeepONet (Version 1)}]\label{def:doneq}
We continue in the setup specified in equation \ref{def:informal}. Further, let $p$ be the common output dimension of $q_B$ depth ``branch net'' $\vf_B$ and the $q_T$ depth ``trunk net'' $\vf_T$.

When required to emphasize some constraint on the weights of a DeepONet, we will denote $\DON(\x_B, \x_T)$ by $\DON_\w(\x_B, \x_T),$ where $\w$ collectively stands for all the weight matrices in branch and trunk networks.
\end{definition} 

Corresponding to the DeepONets defined above we define the following width parameters for the different weight matrices. 

\begin{definition}{\bf (Width Parameters for DeepONets) }\label{def:lempeel}

\begin{itemize} 
\item For $k = 1,\ldots,q_B-1$ and $\ell = 1,\ldots,q_T-1$, define $b_{-k} \mbox{ and } t_{-\ell}$ to be the number of rows in the weight matrices $\B_{q_B-k}$ and $\T_{q_T-\ell}$ respectively.
\item In the setup of Definition \ref{def:doneq}, we define the functions $\f'_B$ and $\f'_T$ s.t given any $m$ inputs to a $\DON$ as, $\{ (\x_{B,i},\x_{T,i}) \mid i = 1, \ldots, m \}$ we have the following equalities, 
\begin{align}\label{def:fprime}
\f_B(\x_{B,i}) =  \B_{q_B} \sigma_1\left(\B_{q_{B}-1} \f'_B(\x_{B,i})\right), ~\f_T(\x_{T,i}) =  \T_{q_T} \sigma_2\left(\T_{q_{T}-1} \f'_T(\x_{T,i})\right).
\end{align}
\end{itemize}
\end{definition}

Next, we define a certain smoothness condition that we need for the activation functions used. 

\begin{assumption}\label{asm:activ}
    Let $\phi_P, \phi_Q : \R \rightarrow \R$ be the activation functions for the branch net and the trunk net such that $\exists L >0$ s.t for any two sets of functions valued in $\R$, say $\cal P$ and $\cal Q$, $\forall (p,q), (p',q') \in \cal P \times \cal Q$ and $\forall (\x,\y) \in {\rm Domain}(\cal P) \times {\rm Domain}(\cal Q)$, the following inequality holds,
    \begin{equation}\label{eq:product contraction}
    |\phi_P(p(\x))\phi_Q(q(\y))-\phi_P(p'(\x))\phi_Q(q'(\y))| \leq L |p(\x)q(\y)-p'(\x)q'(\y)|.
    \end{equation}    
\end{assumption}

Our main results work under the above assumption and hence in particular, our results apply to the absolute value map, $\R \ni x \mapsto  \abs{x} \in \R$, being the activation function in both the branch and the trunk net. In Appendix \ref{rem:absrelu} we will indicate why this is sufficient to also capture DeepONets with $\relu$ activations and being trained on bounded data.

Towards formalizing the setup of the loss functions and the training data for training DeepONets we recall the definition of the Huber loss \cite{huber}.

\begin{definition}[Huber Loss]\label{def:huber}
    For some $\delta \geq 0$ the Huber loss is defined as
    \begin{align*}
        \ell_{H,\delta} (x) \coloneqq
        \begin{cases}
            \frac{1}{2} x^2 &\quad \mbox{for } \abs{x}\leq \delta \\
            \delta \cdot (\abs{x} - \frac{1}{2} \delta) &\quad \mbox{for } \abs{x} > \delta
        \end{cases}        
    \end{align*}
\end{definition}

Corresponding to a choice of any univariate loss function ($\ell$), like Huber loss as above, we define the DeepONet training loss as follows. 

\begin{definition}[A Loss Function for DeepONets]\label{def:donloss}

Given $D \subset \R^d$, a compact set with boundary, we define a function class of allowed forcing functions $\gF \subset C(D)$. Further, we consider $\DON$ maps as given in Definition  \ref{def:doneq}, mapping as $\DON : \R^{d_1} \times \R^{d_2} \rightarrow \R$ and consider an instance of the training data given as, 
\[ \{ (f_i,\x_{T,i}) \in \gF \times \R^{d_2} \mid i = 1,\ldots,\gS \} .\]

Then, for an operator $\mathcal{G} : C(D) \rightarrow H^s(U)$ where $H^s(U)$ is the $L^2$-based Sobolev space for some $s>0$, the corresponding DeepONet loss function is given by,
\begin{align}\label{eq:donloss}
    \gL = \frac{1}{m}\sum_{i=1}^{m} \ell(\gG(f_i)(\x_{T,i}) -\DON(\x_{B,i}, \x_{T,i}) ).
\end{align}

where $\ell$ is some loss function. Here we assume a fixed grid of size $d_1$ on which the function $f_i$ gets discretized, to get $\x_{B,i} \in \R^{d_1}$.
\end{definition}

As the DeepONet function varies, corresponding to it a class of loss functions gets defined via the above equation. This class of DeepONets can be seen as being parameterized by the different choices of weights on the chosen architecture. 

Also, it can be seen that the loss function in the experiment described in Section \ref{sec:over} was a special case of the loss in equation \ref{eq:donloss} for $\ell(x) = \frac{x^2}{2}$, the squared loss -- and if we assume that the numerical solver exactly solved the forced pendulum O.D.E.

Next, we recall the definition of a key statistical quantity, Rademacher complexity, from \cite{bartlett}, which we focus on as our chosen way to measure generalization error for DeepONets. 

\begin{definition}[Empirical and Average Rademacher complexity]\label{def:rad}
For a function class $\mathcal{K},$ suppose being given a $m-$sized data-set of points $\{\x_i\mid i=1,\ldots,m \}$ in the domain of the elements in $\mathcal{K}$. For $\epsilon_i \sim \pm 1$ with equal probability, the corresponding empirical Rademacher complexity is given by $\hat{\mathcal{R}}_{m}({\mathcal K}) = \E_{\epsilon}\left[\sup_{k \in \mathcal{K}} \frac{1}{m} \sum_{i=1}^m \epsilon_i k(\x_i)\right].$ If the elements of this data-set above are sampled from a distribution $P$, then the average Rademacher complexity is given by,\\ 
\begin{align}
    \mathcal{R}_m({\mathcal K}) = \E_{\x_1,\ldots,\x_m \sim P}\left(\E_{\epsilon} \left[\sup_{k \in \mathcal{K}} \frac{1}{m} \sum_{i=1}^m \epsilon_i k(\x_i)\right]\right) \label{def:rad:2}
\end{align}
\end{definition}
The crux of our mathematical analysis will be to uncover a recursive structure between the Rademacher complexity of DeepONets and certain DeepONets of one depth lower in the branch and the trunk network, and which would always have one-dimensional outputs for the branch and the trunk and which share weights with the original DeepONet in the corresponding layers.

\section{Results}\label{sec:summary}

Our central result about Rademacher complexity of DeepONets will be stated in Theorem~\ref{thm:absdon} of Section~\ref{sec:main1} and consequent to that our result about bounding the generalization error of DeepONets will be stated in Theorem~\ref{thm:genbound} of Section~\ref{sec:main2}. In Section~\ref{sec:experiments}, we present an empirical study of our bound on Rademacher complexity for a \DON~ trained to solve the Burgers' P.D.E.




\subsection{First Main Result : Rademacher Complexity of DeepONets}\label{sec:main1}

In the result below we will see how for a certain class of activations, we can get a bound on the Rademacher complexity of DeepONets which does not scale with the width of the nets for the branch and the trunk nets being of arbitrary equal depths.  



\begin{theorem}[{\bf Rademacher Complexity of Special Symmetric DeepONets}]\label{thm:absdon} 
We consider a special case of DeepONets as given in Definition \ref{def:doneq}, with (a) $q_B = q_T = n,$ and \,\,(b) $\sigma_1, \sigma_2$ satisfies Assumption~\ref{asm:activ} for some constant $L >0$, and they are positively homogeneous. Further, let $b_{-i}$ and $t_{-i}$ be the number of rows of the weight matrices $\B_{n-i}$ and $\T_{n-i}$ respectively, as in Definition \ref{def:lempeel}, for $i\geq 1$ and recall from Definition \ref{def:doneq} that $p$ is the number of rows of $\B_{n}$ and $\T_{n}$. Then given a class of DeepONet maps as above, we define the following $n-1$ constants, $\C_{n,n-1} > 0$ and $\C_{-k,-k} > 0,\, k=2,\ldots,n-1,$ such that for $\cS_k \coloneqq S^{b_{-k}-1} \times S^{t_{-k}-1}$, all the DeepONet maps in the class satisfy the following bounds, 
\begin{align*}
    \sum_{k_1=1}^{b_{-1}} \sum_{k_2=1}^{t_{-1}} \norm{\B_{n-1,k_1}} \cdot \norm{\T_{n-1,k_2}}  \cdot \abs{\bigg [ \sum_{j=1}^p (\B_{n,j} \T_{n,j}^\top) \bigg ] _{k_1, k_2}} &\leq \C_{n,n-1}, \\
    \sup_{\substack{(\v,\w) \,\in\, \cS_k}} \sum_{j_1 = 1}^{b_{-k}} \sum_{j_2 = 1}^{t_{-k}} \abs{(\v \w^\top)_{j_1,j_2}} \norm{\B_{n-k,j_1}} \norm{\T_{n-k,j_2}} &\leq \C_{-k,-k}, ~\forall k=2,\ldots,n-1.
\end{align*} 
Then given training data as in Definition \ref{def:donloss}, the empirical Rademacher complexity of this class is bounded as, \[\hat{\mathcal{R}}_{m} \leq \frac{(2L)^{n-1} \C_{n,n-1}}{m} \,\left(\prod_{j=2}^{n-1}\C_{-j,-j}\right)\,\sqrt{\sum_{i=1}^m \norm{{\x}_{B,i}}_2^2 \norm{{\x}_{T,i}}_2^2}.\] Further assuming that the input distribution over ${\cal F} \times \R^{d_2}$ induces marginals distributions s.t $\E\left[\norm{{\x}_B}^2_2\right] \leq M_{x,B}^2, \E\left[\norm{{\x}_T}^2_2\right] \leq M_{x,T}^2$, we have the average Rademacher complexity of the same class bounded as,
\[\mathcal{R}_m \leq \frac{(2L)^{n-1} \C_{n,n-1}}{\sqrt{m}} \,\left(\prod_{j=2}^{n-1}\C_{-j,-j}\right)\, {M_{x,B} M_{x,T}}.\]
\end{theorem}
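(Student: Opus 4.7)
The plan is to prove the empirical Rademacher bound by iterated application of the two peeling lemmas stated in Section \ref{sec:key}, and then lift the result to the average Rademacher complexity via Jensen's inequality and the stated input-moment hypotheses. First, I would apply Lemma \ref{lem:outerpeel} once to the given symmetric DeepONet class: this strips the two outermost weight matrices from each of the branch and the trunk, reducing the problem to bounding the empirical Rademacher complexity of the class $\DON(\cW_{\rm rest})$ from Equation \ref{def:restdon}, and contributing a multiplicative factor of $2L \cdot \C_{n,n-1}$, where the contraction-style manipulation underwritten by Lemma \ref{lem:contraction} (applicable since $\sigma_1,\sigma_2$ satisfy Equation \ref{eq:product contraction}) absorbs the activations, and the constant $\C_{n,n-1}$ absorbs the supremum over the peeled outer four matrices constrained as in Equation \ref{def:wc}.

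Next, I would invoke Lemma \ref{lem:peel} repeatedly $n-2$ times on the resulting reduced class. Each application consumes one more layer from the branch and one from the trunk; crucially, after every invocation the remaining class is again of the form in Equation \ref{def:restdon}, just with $f'_B,f'_T$ one layer shallower, so the lemma can be reapplied without modification. The $k$-th such invocation (for $k=2,\ldots,n-1$) contributes a multiplicative factor of $2L \cdot \C_{-k,-k}$, with $\C_{-k,-k}$ arising precisely as the supremum of $\sum_{j_1,j_2}\abs{(\v\w^\top)_{j_1,j_2}} \cdot \norm{\B_{n-k,j_1}} \cdot \norm{\T_{n-k,j_2}}$ over the unit spheres $\cS_k$ introduced by the previous peel, with positive homogeneity of $\sigma_1,\sigma_2$ being what lets the row-norm factors slide out past the activation at each step. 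After the full sequence of $1+(n-2)=n-1$ peels, every layer has been stripped and what remains is the Rademacher average of the bilinear class $\{(\x_B,\x_T)\mapsto (\u^\top\x_B)(\v^\top\x_T) \mid (\u,\v)\in S^{d_1-1}\times S^{d_2-1}\}$. I would evaluate this directly by rewriting it as $\tfrac{1}{m}\Exp_\epsilon\big\|\sum_i \epsilon_i \x_{B,i}\x_{T,i}^\top\big\|_{\rm op}$, bounding the operator norm by the Frobenius norm, and pulling the expectation inside the square root via Jensen's inequality; using $\Exp[\epsilon_i\epsilon_j]=\delta_{ij}$ then produces the tail factor $\tfrac{1}{m}\sqrt{\sum_i \norm{\x_{B,i}}_2^2 \norm{\x_{T,i}}_2^2}$.

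Multiplying together all the peeling factors and the base-case bound yields the stated empirical Rademacher estimate. To pass to the average Rademacher complexity I would take expectations on both sides over the input distribution, move the expectation inside the square root via concavity of $\sqrt{\,\cdot\,}$, and invoke the hypothesis $\Exp[\norm{\x_B}_2^2] \leq M_{x,B}^2$, $\Exp[\norm{\x_T}_2^2] \leq M_{x,T}^2$ together with the implicit factorization of the branch- and trunk-input moments; this converts $\sqrt{\sum_{i=1}^m}$ into $\sqrt{m}\,M_{x,B}M_{x,T}$, which after the $1/m$ prefactor delivers the advertised $1/\sqrt{m}$ rate. The substantive content of the proof is entirely packaged inside Lemmas \ref{lem:outerpeel} and \ref{lem:peel}; the main thing I expect to be delicate during the assembly step is index alignment across the recursion, in particular ensuring that the unit-sphere variables $(\v,\w)$ produced by one peel match the spheres $\cS_k$ over which the next constant $\C_{-k,-k}$ takes its supremum, and that positive homogeneity of the activation is invoked consistently so the row-norm factors factor out cleanly at each stage without loss of tightness.
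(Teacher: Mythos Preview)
Your proposal is correct and follows essentially the same route as the paper: one application of Lemma \ref{lem:outerpeel}, then $n-2$ iterated applications of Lemma \ref{lem:peel}, followed by a direct bound on the remaining bilinear class and a Jensen step for the average complexity. The paper packages the base-case computation into a separate Lemma \ref{last_step} (relaxing the rank-one constraint to a Frobenius ball and citing a standard linear-class Rademacher bound), which is equivalent to your operator-norm $\leq$ Frobenius-norm plus Jensen argument.
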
 

The proof for the above is outlined in Section \ref{sec:key}. For $\sigma_1 (x) =  \sigma_2 (x) = \abs{x}$ i.e. for DeepONets with absolute value activations, we have $L=1$ and hence the subsequent simplification also happens in the result in Theorem \ref{thm:absdon}. 

Now we have all the necessary setup to state the final result on generalization bound for DeepONets. 

\subsection{Second Main Result : Size-Independent Generalization Error Bound for DeepONets Trained via a Huber Loss on Unbounded Data}\label{sec:main2}

\begin{theorem}[{Generalization Error Bound for DeepONet}]\label{thm:genbound}
    We continue with the setup of the DeepONets as in Theorem~\ref{thm:absdon} but with $\sigma_1(x) = \sigma_2(x) = \abs{x}$. For an operator $\mathcal{G} : C(D) \rightarrow H^s(U)$ where $H^s(U)$ is the $L^2$-based Sobolev space for some $s>0$, a class of forcing functions $\mathcal{F} \subset C(D)$ and $\mathcal{W}$ a set of possible weights $\w$ for DeepONets (denoted as $\DON_\w$), we define the function class $\mathcal{H}$ as,
    \begin{align*}
        \mathcal{H} \coloneqq \{ (f, \mathbf{x}_T) \mapsto \mathcal{G}({f})(\mathbf{x}_T) - \DON_\w(\mathbf{x}_B(f), \mathbf{x}_T) \mid &\w \in \cW, (f, \mathbf{x}_T) \in \cF  \times \R^{d_2}, \\
        &\mathcal{G}({f})(\mathbf{x}_T), \DON_\w(\mathbf{x}_B(f), \mathbf{x}_T) \in \R \}
    \end{align*}
    
    Then for Huber loss $\ell_{H,\delta}(\cdot)$ as in Definition~\ref{def:huber} with $\delta = \left(\frac{1}{2}\right)^{n-1}$ where $n$ is the depth of the branch and the trunk nets we have the following generalization bound,
    \begin{align}
     &\E_{\{(f_i,\x_{T,i})\stackrel{iid}{\sim}\mathcal{D}~\mid i = 1\ldots m\}} \left[ \sup_{\mathbf{w}\in \mathcal{W}} \left[ \frac{1}{m} \sum_{i=1}^m \ell_{H,\delta}(h(f_i,\x_{T,i})) - \E_{(f,\x_{T})\sim\mathcal{D}} \left[ \ell_{H,\delta}(h(f,\x_{T})) \right]
     \right] \right] \\
     \nonumber &\leq \frac{2\C_{n,n-1}}{\sqrt{m}} \,\left(\prod_{j=2}^{n-1}\C_{-j,-j}\right)\, {M_{x,B} M_{x,T}}
    \end{align}
    where $\C_{n,n-1}$ and $\C_{-j,-j}$ are the constants as defined in Theorem~$\ref{thm:absdon}$, and $\mathcal{D}$ is a distribution over $\mathcal{F} \times \R_{d_2}$.
\end{theorem}

The proof for the above theorem is given in Section~\ref{sec:key}.

\subsection{An Experimental Exploration of the Proven Rademacher Complexity Bound for DeepONets}\label{sec:experiments}

In the following sections, we present an experimental study on the complexity bounds for the Burgers' and 2D-Heat PDEs in Sections \ref{subsubsec:burgers} and \ref{subsubsec:heat}, respectively.

\subsubsection{Burgers' PDE}\label{subsubsec:burgers}

Here, we consider the following specification of the Burgers' P.D.E. with periodic boundary conditions, 
\begin{align*}
\frac{\partial s}{\partial t}+s\frac{\partial s}{\partial x} &=\kappa\frac{\partial^2 s}{\partial x^2}, \quad (x,t)\in [{- \pi, \pi}]\times[0,{\rm T}]\\
s(x-\pi,t) = s(x+\pi,t),~&s(x,0)=u(x)
\end{align*}
where, $\kappa >0$ denotes the fluid viscosity, for our experiments we choose $\kappa = 0.01$, and $\ u(x)$ is a $2\pi$-periodic initial condition with zero mean i.e. $\int_{-\pi}^{\pi}u (x)dx=0$ and ${\rm T}\in{\mathbb R}$.

Hence it follows that the solution operator of Definition \ref{def:donloss} corresponding to the above maps the initial condition $u$ to the solution to the Burgers' P.D.E., $s$. Hence, we will approximate the implicit solution operator ${\cal G}$ with a DeepONet $\tilde{{\cal G}}$ --- which for this case would be a map as follows,
\begin{align*}
\R^m \times \R^2 \rightarrow \R, ~(\v,\y) \mapsto \tilde{\cal G}(\v,\y) := \sum_{k=1}^q {\cal N}_{\rm B,k}(\v)\cdot {\cal N}_{\rm T,k}(\y)
\end{align*}
where ${\cal N}_{\rm B}$ and ${\cal N}_{\rm T}$ are any two neural nets mapping $\R^m \rightarrow \R^q$ and $\R^2 \rightarrow \R^q$ respectively - the Branch net and the Trunk net.
In above, $q$, the common output dimension of the branch and the trunk net is a hyperparameter for the experiment. 

The data required to set up the training for the above map $\tilde{G}$ can be summarized as a $3-$tuple of tensors, \\
$(u_{\rm training}, \ y_{\rm training}, \ s_{\rm training})$ of dimensions and descriptions as given below. 

\begin{itemize}
\item $u_{\rm training}\in \R^{N_{\rm training} \times m}$ are evaluations of the $N_{\rm training}$ number of input functions at the $m$ sensors. We denote the $i^{th}$ row of the above as, $[{u}_i(x_1), {u}_i(x_2), \dots, { u}_i(x_{\rm m})] \ \forall i \in \{1,\ldots,N_{\rm training} \}$

\item $y_{\rm training} \in \R^{N_{\rm training} \times P_{\rm training} \times 2}$ are the $P_{\rm training}$ uniformly sampled collocation points in $\R^2$ corresponding to each of the $N_{\rm training}$ number of input functions. And $\forall {\rm i} \in \{1,\ldots,N_{\rm training} \}$,  we denote the sampled points as $\{ (x_{\rm i,j},t_{\rm i,j}) \mid j = 1,\ldots,P_{\rm training} \} \stackrel{\rm uniform}{\sim}  \left([{-\pi,\pi}]\times[0,T] \right )^{P_{\rm training}}$

\item $s_{\rm training}\in{\mathbb R}^{N_{\rm training} \times \ P_{\rm training}}$ are the values of the numerical solution of the P.D.E. at the respective collocation points. And the $(i,j)^{\rm th}-$entry of this would be denoted as $s_{\rm i}(x_{\rm i,j},t_{\rm i,j})$.
\end{itemize}

In terms of the above training data, the corresponding training loss (Definition \ref{def:donloss}) is,

\begin{align}\label{eq:exploss}
{\cal L}_{\rm Training}(\theta) = &\frac{1}{N_{\rm training}P_{\rm training}}\sum_{i=1}^{N_{\rm training}}\sum_{j=1}^{P_{\rm training}}\ell \left( s_i(x_{\rm i,j},t_{\rm i,j}) \right.\nonumber\\
&\left. \quad - \sum_{k=1}^q{\cal N}_{\mathrm{B,k}}(u_i(x_1),u_i(x_2),\dots,u_i(x_m))\cdot{\cal N}_{\mathrm{T,k}}(x_{\rm i,j},t_{\rm i,j}) \right)
\end{align}

whereby in our experiments $\ell$ is either chosen as the $\ell_2$ loss or the Huber loss and $\theta$ denotes the vector of all trainable parameters over both the nets. Similarly as above the test loss ${\cal L}_{\rm Test}$ would be defined corresponding to a random sampling of $N_{\rm test}$ input functions and $P_{\rm test}$ points in the domain of the P.D.E. where the true solution is known corresponding to each of these test input functions. For our experiments, we fixed $N_{training}$ to $400$ and $P_{training}$ is chosen to be the following set of values  $\{ 300, 400, 500, 750, 1000, 1500, 2000 \}$.

Further, for our experiments, we sampled the input functions $u_i$ by sampling functions of the form, \\ $u_i(x) = \sum^{N}_{n=1} c_n \sin((n+1)x)$, where the coefficients were sampled as, $\ c_n \sim \mathcal{N}(0, A^2)$, with $A$ and $N$ being arbitrarily chosen constants. This way of sampling ensures that the functions chosen are all $2\pi$-periodic and will have zero mean in the interval $[-\pi, \pi]$ - as required in our setup. 

In this section, we demonstrate our Rademacher complexity bound by training DeepONets on the above loss function and measuring the generalization gap of the trained DeepONet obtained and computing for this trained DeepONet an upperbound on the essential part of our Rademacher bound (i.e. $\frac{\C_{n,n-1}}{\sqrt{m}} \,\left(\prod_{j=2}^{n-1}\tilde{\C}_{-j,-j}\right)$ where $\tilde{\cal C}_{-k,k}$ is as defined in equation \ref{eq:cup}). Then we vary the number of training data (i.e. $N_{\rm training} \times P_{\rm training}$) and show that the two numbers computed above are significantly correlated as this training hyperparameter is varied. We use branch and trunk nets each of which are of depth $3$ and width $100$.

The correlation plots shown in  Figures~\ref{fig:huber_delta0.25_300_1500} and \ref{fig:huber_delta0.4_300_2000}, correspond to training on the Huber loss function, with two different values of $\delta$ - and the former corresponds to the special value of $\delta = \frac{1}{4}$ where the size-independent generalization bound in Theorem \ref{thm:genbound} clicks for this setup.

In Figure~\ref{fig:l2_500_4000} of Appendix, we repeat the same experiments but for the $\ell_2$ loss function and show that the required correlation persists even with this loss which is outside the scope of Theorem \ref{thm:genbound}.


\begin{figure}[!h]
    \centering
    \begin{subfigure}[h]{0.49\textwidth}
        \includesvg[width=\textwidth]{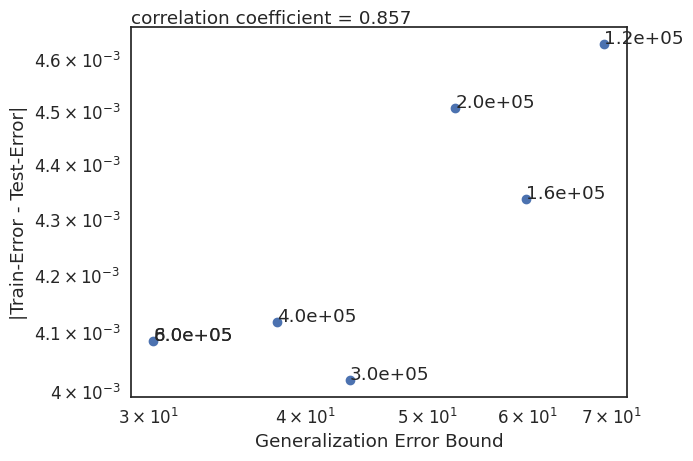}
        \caption{$\delta$=$0.25$, correlation = $0.857$ }
        \label{fig:huber_delta0.25_300_1500}
    \end{subfigure}
    \hfill
\begin{subfigure}[h]{0.49\textwidth}
        \includesvg[width=\textwidth]{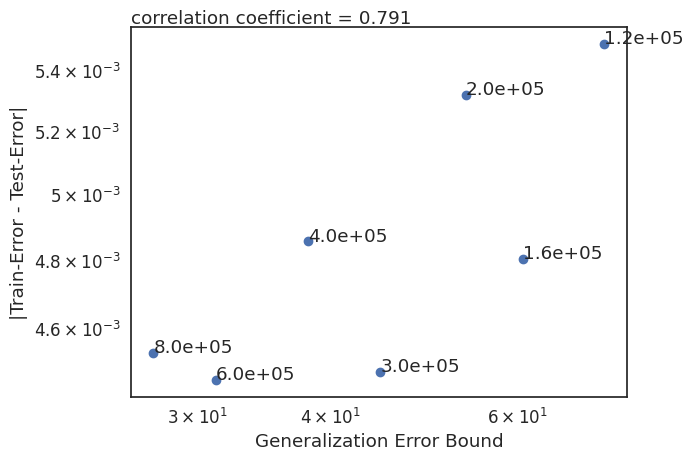}
        \caption{$\delta$=$0.4$, correlation = $0.791$}
        \label{fig:huber_delta0.4_300_2000}
    \end{subfigure}
    \caption{ The above plot shows the behaviour of the measured generalization error with respect to $\frac{\C_{3,2} \, \tilde{\C}_{-2,-2}}{\sqrt{m}}$ for training DeepONets, to solve the Burgers' PDE, with empirical loss as given in equation \ref{eq:exploss}, specialized to the Huber loss (Definition~\ref{def:huber}) for the stated values of $\delta$ and for the branch and the trunk nets being of depth $3$. Each point is labelled by the number of training data used in that experiment.}
    \label{fig:plot_huber}
\end{figure}


\subsubsection{Heat PDE}\label{subsubsec:heat}

Here, we consider the following specification of the $2$D-Heat P.D.E. \citep{altaisan24thesis}, 
\begin{align*}
\frac{\partial u}{\partial t} = \kappa\nabla^2u =~& \kappa(\frac{\partial^2 u}{\partial x^2}+\frac{\partial^2 u}{\partial y^2}), \quad (x,y)\in D, \quad t \in [0,T]\\
u(x,y,t) = 0, \quad (x,y)\in \partial D,~&\quad u(x,y,0)=f(x,y), \quad (x,y)\in D
\end{align*}
where, $D$ is the $2$-dimensional square $[0,1]\times[0,1]$, $\kappa >0$ denotes the thermal diffusivity, for our experiments we choose $\kappa=1$, and $f(x,y)$ is a $2\pi$-periodic initial condition and ${\rm T}\in{\mathbb R}$.

In this case, the solution operator of Definition \ref{def:donloss} corresponding to the above maps the initial condition $f$ to the solution to the Heat P.D.E, $s$. Hence, we will approximate the implicit solution operator ${\cal G}$ with a DeepONet $\tilde{{\cal G}}$ 
--- which for this case would be a map as follows,
\begin{align*}
\R^m \times \R^3 \rightarrow \R, ~(\v,\y) \mapsto \tilde{\cal G}(\v,\y) := \sum_{k=1}^q {\cal N}_{\rm B,k}(\v)\cdot {\cal N}_{\rm T,k}(\y)
\end{align*}
where ${\cal N}_{\rm B}$ and ${\cal N}_{\rm T}$ are any two neural nets mapping $\R^m \rightarrow \R^q$ and $\R^3 \rightarrow \R^q$ respectively - the Branch net and the Trunk net.
In above, $q$, the common output dimension of the branch and the trunk net is a hyperparameter for the experiment. We have chosen $q=128$ and $m=100$ for our experiments.

The data required to set up the training for the above map $\tilde{G}$ can be summarized as a $3-$tuple of tensors, \\
$(f_{\rm training}, \ p_{\rm training}, \ u_{\rm training})$ of dimensions and descriptions as given below.

\begin{itemize}
    \item $f_{\rm training}\in \R^{N_{\rm training} \times m}$ are evaluations of the $N_{\rm training}$ number of input functions at the $m$ sensors. We denote the $i^{th}$ row of the above as, $[{f}_i(x_1, y_1), {f}_i(x_2, y_2), \dots, { f}_i(x_{\rm m}, y_{\rm m})] \ \forall i \in \{1,\ldots,N_{\rm training} \}$
    
    \item $p_{\rm training} \in \R^{N_{\rm training} \times P_{\rm training} \times 3}$ are $P_{\rm training}$ uniformly sampled collocation points in $\R^3$ corresponding to each of the $N_{\rm training}$ number of input functions. And $\forall {\rm i} \in \{1,\ldots,N_{\rm training} \}$,  we denote the sampled points as $\{ (x_{\rm i,j},y_{\rm i,j},t_{\rm i,j}) \mid j = 1,\ldots,P_{\rm training} \} \stackrel{\rm uniform}{\sim}  \left([0,1]\times[0,1]\times[0,T] \right )^{P_{\rm training}}$
    
    \item $u_{\rm training}\in{\mathbb R}^{N_{\rm training} \times \ P_{\rm training}}$ are the values of the numerical solution of the P.D.E. at the respective collocation points. And the $(i,j)^{\rm th}-$entry of this would be denoted as $u_{\rm i}(x_{\rm i,j},y_{\rm i,j},t_{\rm i,j})$.
\end{itemize}

In terms of the above training data, the corresponding training loss (Definition \ref{def:donloss}) is, 

\begin{align}\label{eq:exploss2}
{\cal L}_{\rm Training}(\theta)= ~&\frac{1}{N_{\rm training}P_{\rm training}}\sum_{i=1}^{N_{\rm training}}\sum_{j=1}^{P_{\rm training}}\ell \left( u_i(x_{\rm i,j},y_{\rm i,j},t_{\rm i,j}) \right.\nonumber\\
&\quad\quad\quad\quad\quad\quad\quad\left.-\sum_{k=1}^q{\cal N}_{\mathrm{B,k}}(f_i(x_1, y_1),f_i(x_2, y_2),\dots,f_i(x_m, y_m))\cdot{\cal N}_{\mathrm{T,k}}(x_{\rm i,j},y_{\rm i,j},t_{\rm i,j}) \right)
\end{align}

whereby in our experiments $\ell$ is either chosen as the $\ell_2$ loss or the Huber loss and $\theta$ denotes the vector of all trainable parameters over both the nets. Similarly as above the test loss ${\cal L}_{\rm Test}$ would be defined corresponding to a random sampling of $N_{\rm test}$ input functions and $P_{\rm test}$ points in the domain of the P.D.E. where the true solution is known corresponding to each of these test input functions. For our experiments, we fixed $N_{training}$ to $64$ and $P_{training}$ is chosen to be the following set of values  $\{ 2^6, 3^6,\dots, 7^6 \}$.

Further, for our experiments, we sampled the input functions $f_i$ by sampling functions of the form, \\$f_i(x,y) = \left[\sum^{N}_{m=1}c_m{\rm sin}(\frac{m\pi x}{x_0})\right] \cdot \left[\sum^{N}_{n=1}d_n{\rm sin}(\frac{n\pi y}{y_0})\right]$, where the coefficients were sampled as, $\ c_m,d_n \sim U([100, 200])$, with $N$ is fixed to $2$. This way of sampling ensures that the functions chosen are all $2\pi$-periodic and we know the exact analytical solution for these initial conditions to be $ u(x,y,t) = \sum_{m=1}^N \sum_{n-1}^N A_{mn} \sin(\frac{m \pi x}{x_0}) \sin(\frac{n \pi y}{y_0})\ e^{-\pi^2 \left(\frac{m^2}{x_0^2} + \frac{n^2}{y_0^2}\right) t} $, where $N=2$, $A_{mn} = c_m\ d_n$ and $x_0, y_0 = 1$. This exactness leads to a more controlled measure of the test error than in the previous setup. 

Here, we demonstrate our Rademacher complexity bound by training DeepONets on the above loss function and measuring the generalization gap of the trained DeepONet obtained and computing for this trained DeepONet an upper-bound on the essential part of our Rademacher bound (i.e. $\frac{\C_{n,n-1}}{\sqrt{m}} \,\left(\prod_{j=2}^{n-1}\tilde{\C}_{-j,-j}\right)$ where $\tilde{\cal C}_{-k,k}$ is as defined in equation \ref{eq:cup}). Then we vary the number of training data (i.e. $N_{\rm training} \times P_{\rm training}$) and show that the two numbers computed above are significantly correlated as this training hyperparameter is varied. We use branch and trunk nets each of which are of depth $3$ and width $128$.

The correlation plots shown in  Figures~\ref{fig:plot_heat_huber_0.25_boundcorr} and \ref{fig:plot_heat_huber_0.4_boundcorr}, correspond to training on the Huber loss function, with two different values of $\delta$ - and the former corresponds to the special value of $\delta = \frac{1}{4}$ where the size-independent generalization bound in Theorem \ref{thm:genbound} clicks for this setup.

Going beyond the ambit of the Rademacher bound proven, in Figure \ref{fig:plot_apheat} in Appendix \ref{app:heatplot}, we present the plots demonstrating the solution of the Heat P.D.E. during training with $\ell_2$ and Huber loss, while allowing for ReLU activation and biases in the layers. It can be seen that Huber loss demonstrates noticeably better performance compared to $\ell_2$ loss.


\begin{figure}[!ht]
    \centering
    \begin{subfigure}[h]{0.49\textwidth}
        \includesvg[width=\textwidth]{Sections/images/plot_heat_huber_0.25_boundcorr.svg}
        \caption{$\delta$=$0.25$, correlation = $0.669$ }
        \label{fig:plot_heat_huber_0.25_boundcorr}
    \end{subfigure}
    \hfill
\begin{subfigure}[h]{0.49\textwidth}
        \includesvg[width=\textwidth]{Sections/images/plot_heat_huber_0.4_boundcorr.svg}
        \caption{$\delta$=$0.4$, correlation = $0.717$}
        \label{fig:plot_heat_huber_0.4_boundcorr}
    \end{subfigure}
    \caption{ The above plot shows the behaviour of the measured generalization error with respect to $\frac{\C_{3,2} \, \tilde{\C}_{-2,-2}}{\sqrt{m}}$ for training DeepONets, to solve the $2$D-Heat PDE, with empirical loss as given in equation \ref{eq:exploss2}, specialized to the Huber loss (Definition~\ref{def:huber}) for the stated values of $\delta$ and for the branch and the trunk nets being of depth $3$. Each point is labelled by the number of training data used in that experiment.}
    \label{fig:plot_huber_heat}
\end{figure}


\section{Discussion}\label{sec:conclude}

The most immediate research direction that is being suggested by our generalization error bound is to explore if the advantages shown here for Huber loss can make DeepONets competitive with CNO \citep{mishra2023convolutional}. 

Further, in \cite{wang2021pdeOnet}, an unsupervised variation of the loss function of a DeepONet was shown to give better performance. In \cite{vdon}, authors employ a variational framework for solving differential equations using DeepONets, through a novel loss function. Understanding precisely when these variations in the loss function used give advantages over the basic DeepONet loss is yet another interesting direction for future research - and the path towards such goals could be to explore if the methods demonstrated here for computation of generalization bounds can be used for these novel setups too.



Also, it would be fruitful to understand how the generalization bounds obtained in this work can be modified to cater to the variations of the DeepONet architecture \citep{kontolati2023learning}, \citep{bonev2023spherical} \cite{lu_fair} that are getting deployed. 

In the context of understanding the generalization error of Siamese networks, the authors in \cite{puruxml} dealt with a certain product of neural outputs structure (where the nets share weights). In their analysis, the authors bound the Rademacher complexity via covering numbers. Since we try to directly bound the Rademacher complexity for DeepONets, it would be interesting to investigate if our methods can be adapted to improve such results about Siamese nets.

Lastly, we note that the existing bounds on the Rademacher complexity of nets have typically been proven by making ingenious use of the algebraic identities satisfied by Rademacher complexity (Theorem $12$ in \cite{bartlett}). But to the best of our knowledge, we are not aware of any general result on how Rademacher complexity of a product of function spaces can be written in terms of individual Rademacher complexities. We posit that such a mathematical development, if achieved in generality, can be a significant advance affecting various fields.

\section{Methods} \label{sec:key}

In this section, we will outline the proofs of the main results, Theorems~\ref{thm:absdon} and \ref{thm:genbound}. We would like to emphasize that the subsequent propositions \ref{lem:outerpeel} and \ref{lem:peel} hold in more generality than Theorem \ref{thm:absdon}, because they do not need the branch and the trunk nets to be of equal depth.

\paragraph{Outline of the Proof Techniques} Derivation of the first main result, Theorem \ref{thm:absdon}, involves $3$ key steps: ~\textbf{(a)} formulating a variation of the standard Talagrand contraction (Lemma \ref{lem:contraction})\, \textbf{(b)} using this to bound the Rademacher complexity of a class of DeepONets with certain activations (e.g. absolute value) by the Rademacher complexity for a class of DeepONets having one less depth and $1-$dimensional outputs, for both the branch and the trunk (Lemma \ref{lem:outerpeel}) and lastly\, \textbf{(c)} uncovering a recursive structure for the Rademacher complexity across depths between special DeepONet classes having $1-$dimensional outputs for both the branch and the trunk. (Lemma \ref{lem:peel}).

Lemma \ref{lem:outerpeel} removes the last $4$ matrices from the DeepONet ($2$ each from the branch and the trunk) leading to one-dimensional output branch and trunk nets. Lemma \ref{lem:peel} removes 2 matrices ($1$ each from branch and trunk) -- by an entirely different argument than needed in the former. Lemma \ref{lem:outerpeel} is invoked only once at the beginning, while  Lemma \ref{lem:peel} is repeatedly used for each remaining layer of the DeepONet. 

We note that both our ``peeling'' lemmas above are structurally very different from the one in \cite{rgs} - where the last layer of a standard net gets peeled in every step. 

Deriving the second main result, Theorem~\ref{thm:genbound}, involves the following key steps: ~\textbf{(a)} establishing a relationship between the Rademacher complexity of the loss class and that of the $\DON$ when the loss function is Lipschitz (Proposition~\ref{thm:gen}) and \textbf{(b)} using this to upper bound the expectation over data of the supremum of generalization error in terms of the Rademacher complexity of the \DON.

Towards proving Theorem \ref{thm:absdon}, we need the following lemma which can be seen as a variation of the standard Talagrand contraction lemma,  

\begin{lemma}\label{lem:contraction}
Let $\phi_P,\phi_Q : \R \rightarrow \R$ be two functions such that Assumption~\ref{asm:activ} holds and let $\cal{P}$ and $\cal{Q}$ be any two sets of real-valued functions. Then given any two sets of points $\{\x_i \mid i=1,\ldots,m \}$ and $\{\y_i \mid i=1,\ldots,m \}$ in the domains of the functions in $\cal P$ and $\cal Q$ respectively, we have the following inequality of Rademacher complexities - where both the sides are being evaluated on this same set of points, 
\[ \Radm(\phi_P \circ \mathcal{P} \cdot \phi_Q \circ \mathcal{Q}) \leq L \Radm(\cal{P} \cdot \cal Q).\]
\end{lemma} 

The above lemma has been proven in Appendix \ref{proof:contraction1}.

Towards stating Propositions~\ref{lem:outerpeel} and \ref{lem:peel}, we will need to define certain classes of sub-DeepONets s.t these sub-DeepONets would be one depth lower in the branch and trunk network, and would always have one dimensional outputs for both the branch and trunk nets and would share weights below that with the corresponding layers in the original DeepONet.

\begin{definition}{\bf (Classes of sub-DeepONets) }\label{def:lempeel2}
Let $\cW_{\rm rest}$ be a set of allowed matrices for nets $f'_B$ and $f'_T$ as in Definition~\ref{def:lempeel}. Now, given a constant $\C_{q_B,q_T,q_{B}-1,q_{T}-1} >0$ we define the following set of $4-$tuples of outermost layer of matrices in the $\DON$ as, 
\begin{align}\label{def:wc} 
    \nonumber \cW(\C_{q_B,q_T,q_{B}-1,q_{T}-1}) &\coloneqq \Bigg \{ (\B_{q_B},\B_{q_{B}-1},\T_{q_T},\T_{q_{T}-1}) \,\,\Bigg | \\ 
    &\sum_{k_1=1}^{b_{-1}} \sum_{k_2=1}^{t_{-1}} \norm{\B_{q_{B}-1,k_1}} \cdot \norm{\T_{q_{T}-1,k_2}}  \cdot  \abs{\Big[  \sum_{j=1}^p (\B_{q_B,j} \T_{q_T,j}^\top) \Big] _{k_1, k_2}} \leq \C_{q_B,q_T,q_{B}-1,q_{T}-1} \Bigg \}.
\end{align}
Secondly, corresponding to the above we define the following class of $\DON$s, 
\begin{align}\label{def:origdon} 
    \nonumber \DON(\cW(\C_{q_B,q_T,q_{B}-1,q_{T}-1}), \cW_{\rm rest}) &\coloneqq \\ 
    \Big \{ \DON_\w \text{ (as in Definition \ref{def:doneq})}\,\,  &\Big | \,\,\w \in (\cW(\C_{q_B,q_T,q_{B}-1,q_{T}-1}), \cW_{\rm rest}) \Big \}.
\end{align}
Lastly, we define the following class of DeepONets, 
\begin{align}\label{def:restdon}  
    \nonumber &\DON(\cW_{\rm rest}) \coloneqq \Bigg \{(\x_B,\x_T) \mapsto \v^\top \f'_B(\x_{B} ) \cdot \w^\top \f'_T(\x_{T}) \in \R \\
     & \bigg | \,\,\,\text{the set of allowed matrices for nets } ~f'_B \mbox{ and } f'_T \text{ are in } \cW_{\rm rest}\,\, ; \,\, ~(\v,\w) \in S^{b_{-2}-1} \times S^{t_{-2}-1}\,\Bigg \}.
\end{align}    
\end{definition}



\begin{proposition}[{\bf Removal of the Last $4$ Matrices of a DeepONet}]\label{lem:outerpeel} 


We continue to be in the setup of Definition \ref{def:lempeel} and assuming that $\sigma_1, \sigma_2$ satisfies Assumption~\ref{asm:activ} for some $L > 0$, and are positively homogeneous. Then given the definitions of  $\DON(\cW(\C_{q_B,q_T,q_{B}-1,q_{T}-1}), \cW_{\rm rest})$  and $\DON( \cW_{\rm rest})$ in Definition \ref{def:lempeel2}, we have the following upperbound on empirical Rademacher complexity of a DeepONet, (here $\cS_k \coloneqq S^{b_{-k}-1} \times S^{t_{-k}-1}$)
\begin{align*}
&\hat{\mathcal{R}}_{m} \Big (\DON(\cW(\C_{q_B,q_T,q_{B}-1,q_{T}-1}), \cW_{\rm rest}) \Big )\\
&\leq  2L \cdot  \C_{q_B,q_T,q_{B}-1,q_{T}-1} \cdot    \E_{\epsilon} \Bigg [ \sup_{ \substack{(\v,\w) \,\in\, \cS_2 \\ \cW_{\rm rest} }}  \Bigg (   \sum_{i=1}^m \epsilon_i \v^\top \f'_B(\x_{B,i}) \cdot  \w^\top \f'_T(\x_{T,i})  \Bigg ) \Bigg ]\\
&\leq 2L \cdot  \C_{q_B,q_T,q_{B}-1,q_{T}-1} \cdot \hat{\mathcal{R}}_{m} \Big (\DON(\cW_{\rm rest}) \Big ).
\end{align*}
\end{proposition} 

Note that both sides of the above are computed for the same data $\{ (\x_{B,i},\x_{T,i}) \in \R^{d_1} \times \R^{d_2} \mid i = 1, \ldots, m \}$. The proof of the above proposition is given in Appendix \ref{outerpeelproof}.

Referring to the definitions of the $\DON$ classes on the L.H.S. and the R.H.S. of the above, as given in equations \ref{def:origdon} and \ref{def:restdon} respectively, we see that the above lemma upperbounds the Rademacher complexity of a $\DON$ class (whose individual nets can have multi-dimensional outputs) by the Rademacher complexity of a simpler $\DON$ class. The $\DON$ class in the R.H.S. is simpler because the last layer of each of the individual nets therein is constrained to be a unit vector of appropriate dimensions (and thus the individual nets here are always of $1$ dimensional output) -- and whose both branch and the trunk are shorter in depth by $1$ activation and $1$ linear transform than in the L.H.S. 

\begin{proposition}[{\bf Peeling for DeepONets}]\label{lem:peel}
We continue in the setup of Proposition \ref{lem:outerpeel} and define the functions $\f''_B$ and $\f''_T$ s.t we have the following equalities, $\f'_B =   \sigma_1\left(\B_{q_{B}-2} \f''_B\right) \mbox{ and } \f'_T =  \sigma_2\left(\T_{q_{T}-2} \f''_T \right)$
Further, given a constant $\C_{-2,-2} >0$, we define $\cW'_{rest}$ as the union of (a) the set of weights that are allowed in the $\cW_{rest}$ set for the matrices $\B_{q_B-3}, \B_{q_B-4},\ldots,\B_1$ and $\T_{q_T-3}, \T_{q_T-4},\ldots,\T_1$  and (b) the subset of the weights for $\B_{q_B-2} \mbox{ and } \T_{q_T-2}$ that are allowed by $\cW_{rest}$ which also additionally satisfy the constraint, (here $\cS_k \coloneqq S^{b_{-k}-1} \times S^{t_{-k}-1}$)
\begin{align}\label{eq:c22}
    \sup_{
    \substack{(\v,\w) \,\in\, \cS_2}} \sum_{j_1 = 1}^{b_{-2}} \sum_{j_2 = 1}^{t_{-2}} \abs{(\v \w^\top)_{j_1,j_2}} \norm{\B_{q_B-2,j_1}} \norm{\T_{q_T-2,j_2}} \leq \C_{-2,-2}.
\end{align} Then we get the following inequality between Rademacher complexities,
\begin{align*}
    &\E_{\epsilon} \Bigg [ \sup_{ \substack{(\v,\w) \,\in\, \cS_2 \\ \cW_{\rm rest} }}   \Bigg (   \sum_{i=1}^m \epsilon_i \v^\top \f'_B(\x_{B,i}) \cdot  \w^\top \f'_T(\x_{T,i})  \Bigg ) \Bigg ]\\
    &\leq 2L\C_{-2,-2}\E_{\epsilon} \Bigg [   \sup_{\substack{(\v, \w) \,\in\, \cS_3 \\ \cW'_{rest}}} \left( {\sum_{i=1}^m \epsilon_i (\v^\top \f''_B(\x_{B,i})) (\w^\top \f_T''(\x_{T,i}))} \right) \Bigg ],
\end{align*} where $b_{-k}, t_{-k}$ are as in Definition \ref{def:lempeel}.
\end{proposition} 


Proof of the above proposition is given in Appendix \ref{proof:peel}. And now we have stated all the intermediate results needed to prove our bounds on the Rademacher complexity of a certain DeepONet class.

\paragraph{Proof of Theorem \ref{thm:absdon} }
~\\
\begin{proof}[] 
For each $k=2,3,\ldots,n-1,\,$ we define a product of unit-spheres, $\mathcal{S}_k := S^{b_{-k}-1} \times S^{t_{-k}-1}$ and let $\mathcal{S}_n \coloneqq S^{d_1-1} \times S^{d_2-1}.$ Now we define, \[\cW_{\rm rest}^1 \coloneqq \bigg\{ (\B_n, \B_{n-1}, \T_n, \T_{n-1}) \,\, \bigg \mid \,\, \sum_{k_1=1}^{b_{-1}} \sum_{k_2=1}^{t_{-1}} \norm{\B_{n-1,k_1}} \cdot \norm{\T_{n-1,k_2}}  \cdot \bigg [ \sum_{j=1}^p (\B_{n,j} \T_{n,j}^\top) \bigg ] _{k_1, k_2} \leq \C_{n,n-1}  \bigg\}.\]
Next, for each $i=2,3,\ldots,n-1$, we define \[\cW_{\rm rest}^i \coloneqq \bigcup_{k=1}^{n-i} \bigg\{(\B_k, \T_k) \,\,\bigg\mid \,\,\sup_{\substack{(\v,\w) \,\in\, \cS_k}} \sum_{j_1 = 1}^{b_{-k}} \sum_{j_2 = 1}^{t_{-k}} \abs{(\v \w^\top)_{j_1,j_2}} \cdot \norm{\B_{n-k,j_1}} \cdot \norm{\T_{n-k,j_2}} \leq \C_{-k,-k}\bigg\}.\]
Thus we have, 
\begin{align*}
    m \hat{\mathcal{R}}_{m} = \E_{\epsilon}  \Bigg [ \sup_{\substack{(\cW_{\rm rest}^1, \cW_{\rm rest}^2)}}  \left(   \sum_{i=1}^m \epsilon_i \ip{\B_{n} \sigma_1\left(\B_{n-1} \f'_B(\x_{B,i})\right)}{\T_{n} \sigma_2\left(\T_{n-1} \f'_T(\x_{T,i})\right)} \right ) \Bigg ].
\end{align*}
Then we can invoke Lemma \ref{lem:outerpeel} on the above to get, 
\begin{align*}
    m \hat{\mathcal{R}}_{m} &= \E_{\epsilon}  \Bigg [ \sup_{\substack{(\cW_{\rm rest}^1, \cW_{\rm rest}^2)}}  \left(   \sum_{i=1}^m \epsilon_i \ip{\B_{n} \sigma_1\left(\B_{n-1} \f'_B(\x_{B,i})\right)}{\T_{n} \sigma_2\left(\T_{n-1} \f'_T(\x_{T,i})\right)} \right ) \Bigg ]\\
    &\leq 2L\C_{n,n-1} \E_{\epsilon}  \Bigg [ \sup_{\substack{\cW_{\rm rest}^2 \\ \v_1, \v_2 \,\in\, \mathcal{S}_{2}}}  \left(\sum_{i=1}^m \epsilon_i { \v_1^\top\f'_B(\x_{B,i})} \,{ \v_2^\top \f'_T(\x_{T,i})} \right) \Bigg ].
\end{align*} Now, using Lemma \ref{lem:peel} repeatedly on the R.H.S above, and defining in a natural fashion the subsequent branch and trunk sub-networks as $\f^{(i)}(\cdot)$ we have,
\begin{align*}
    m \hat{\mathcal{R}}_{m} &\leq  (2L)^2 \C_{n,n-1} \,\C_{-2,-2}\E_{\epsilon}  \left( \sup_{\substack{ \cW_{rest}^3\\(\v_1, \v_2) \,\in\, \cS_{3}}} {\sum_{i=1}^m \epsilon_i (\v_1^\top \f''_B(\x_{B,i})) (\v_2^\top \f_T''(\x_{T,i}))} \right)\\
    &\vdotswithin{\leq} \\
    &\leq (2L)^{i} \C_{n,n-1}\,\left(\prod_{j=2}^{i}\C_{-j,-j}\right)\,\E_{\epsilon} \left( \sup_{\substack{\cW_{\rm rest}^{i+1} \\ (\v_1, \v_2) \,\in\, \cS_{i+1}}} {\sum_{i=1}^m \epsilon_i (\v_1^\top \f^{(i)}_B(\x_{B,i})) (\v_2^\top \f^{(i)}_T(\x_{T,i}))} \right)\\
    &\vdotswithin{\leq} \\
    &\leq (2L)^{n-1} \C_{n,n-1} \,\left(\prod_{j=2}^{n-1}\C_{-j,-j}\right)\,\E_{\epsilon} \left( \sup_{\substack{(\v_1, \v_2) \,\in\, \cS_n}} {\sum_{i=1}^m \epsilon_i (\v_1^\top \x_{B,i}) (\v_2^\top \x_{T,i})} \right).
\end{align*} 
Using Lemma $\ref{last_step}$, the final bound on the empirical Rademacher complexity becomes,
\[\hat{\mathcal{R}}_{m} \leq \frac{(2L)^{n-1} \C_{n,n-1}}{m} \,\left(\prod_{j=2}^{n-1}\C_{-j,-j}\right)\,\sqrt{\sum_{i=1}^m \norm{\tilde{\x}_i}_2^2}.\]
Invoking the assumption that the input is bounded s.t $\E\left[\norm{\tilde{\x}_B}^2_2\right] \leq M_{x,B}^2$ and $\E\left[\norm{\tilde{\x}_T}^2_2\right] \leq M_{x,T}^2,$ the average Rademacher complexity can be bounded as
\[\mathcal{R}_{m} \leq \frac{(2L)^{n-1} \C_{n,n-1}}{\sqrt{m}} \,\left(\prod_{j=2}^{n-1}\C_{-j,-j}\right)\, {M_{x,B} M_{x,T}}.\]
\end{proof}

\begin{lemma}{\label{last_step}} 
\[
\E_{\epsilon} \left( \sup_{\substack{(\v_1, \v_2) \,\in\, \cS_n}} {\sum_{i=1}^m \epsilon_i (\v_1^\top \x_{B,i}) (\v_2^\top \x_{T,i})} \right) \leq \sqrt{\sum_{i=1}^m \norm{\tilde{\x}_{i}}_2^2}.
\]
where $\tilde{\x}_i = \x_{B,i}\x_{T,i}^\top$ and $\cS_n = S^{d_1-1} \times S^{d_2-1}$ and each $\epsilon_i \sim \pm 1$ uniformly.
\end{lemma}
Proof of the above lemma is given in Appendix \ref{app:last_step}. In Appendix \ref{app:generalize} we have setup a general framework for using Rademacher bounds as proven in the above theorem to prove generalization error bounds for DeepONets. And we will now invoke the result there to prove our main theorem about generalization error of DeepONets.

\paragraph{Proof of Theorem \ref{thm:genbound}} 
~\\
\begin{proof}[]
    We recall from Theorem~\ref{thm:absdon} the following bound on Rademacher complexity for the given DeepONet class,
     \begin{align*}
         \mathcal{R}_m \leq \frac{(2L)^{n-1} \C_{n,n-1}}{\sqrt{m}} \,\left(\prod_{j=2}^{n-1}\C_{-j,-j}\right)\, {M_{x,B} M_{x,T}}
     \end{align*}
     Combining the above with Proposition~\ref{thm:gen} we have,
     \begin{align*}
         &\E_{\{(f_i,\x_{T,i})\stackrel{iid}{\sim}\mathcal{D}~\mid i = 1\ldots m\}} \left[ \sup_{h\in \mathcal{H}} \left[ \frac{1}{m} \sum_{i=1}^m \ell(h(f_i,\x_{T,i})) - \E_{(f,\x_{T})\sim\mathcal{D}} \left[ \ell(h(f,\x_{T})) \right] \right] \right] \\
         &\leq 2R\cdot\frac{(2L)^{n-1} \C_{n,n-1}}{\sqrt{m}} \,\left(\prod_{j=2}^{n-1}\C_{-j,-j}\right)\, {M_{x,B} M_{x,T}}
     \end{align*}
    Note that Huber-loss $\ell_{H,\delta}$ is $\delta$-Lipshitz and further setting $\delta = \left(\frac{1}{2}\right)^{n-1}$ and rewriting the supremum over all $h \in \mathcal{H}$ as supremum over all $\w \in \mathcal{W}$ we can write the above inequality as
    \begin{align*}
         &\E_{\{(f_i,\x_{T,i})\stackrel{iid}{\sim}\mathcal{D}~\mid i = 1\ldots m\}} \left[ \sup_{\w\in \mathcal{W}} \left[ \frac{1}{m} \sum_{i=1}^m \ell_{H,\delta}(h(f_i,\x_{T,i})) - \E_{(f,\x_{T})\sim\mathcal{D}} \left[ \ell_{H,\delta}(h(f,\x_{T})) \right] \right] \right] \\
         &\leq \frac{(2L)^{n-1} \C_{n,n-1}}{2^{(n-2)}\sqrt{m}} \,\left(\prod_{j=2}^{n-1}\C_{-j,-j}\right)\, {M_{x,B} M_{x,T}}
    \end{align*}
    Lastly, considering that the activation functions are $\sigma_1(x) = \sigma_2(x) = \abs{x}$ which makes $L=1$ in Lemma~\ref{lem:contraction} we obtained the claimed bound,
    \begin{align*}
         &\E_{\{(f_i,\x_{T,i})\stackrel{iid}{\sim}\mathcal{D}~\mid i = 1\ldots m\}} \left[ \sup_{\w\in \mathcal{W}} \left[ \frac{1}{m} \sum_{i=1}^m \ell_{H,\delta}(h(f_i,\x_{T,i})) - \E_{(f,\x_{T})\sim\mathcal{D}} \left[ \ell_{H,\delta}(h(f,\x_{T})) \right] \right] \right] \\
         &\leq \frac{2 \C_{n,n-1}}{\sqrt{m}} \,\left(\prod_{j=2}^{n-1}\C_{-j,-j}\right)\, {M_{x,B} M_{x,T}}
    \end{align*}
\end{proof}

\bibliographystyle{tmlr}
\bibliography{tmlr}

\clearpage 
\appendix

\section*{Appendix}


\section{Evidence for Training DeepONets with $\tanh$ Activation and Huber Loss}\label{app:tanh}

Similar to the experiments in Section~\ref{sec:experiments}, in this section too we use DeepONets whose both branch and trunk networks are of depth of $3$ and have a width of $100$. However, unlike Section~\ref{sec:experiments}, the activation function used in the experiments displayed in Figure~\ref{fig:burgers_tanhDON_huber} is $\tanh$. The chosen training loss is the same as in equation~\ref{eq:exploss}, where $\ell$ is set to the Huber loss with  $\delta = \frac{1}{2^{3-1}} = 0.25$, the value as would be inspired by Theorem \ref{thm:genbound}.

\begin{figure}[!htbp]
    \centering
    \includegraphics[width=0.82\textwidth]{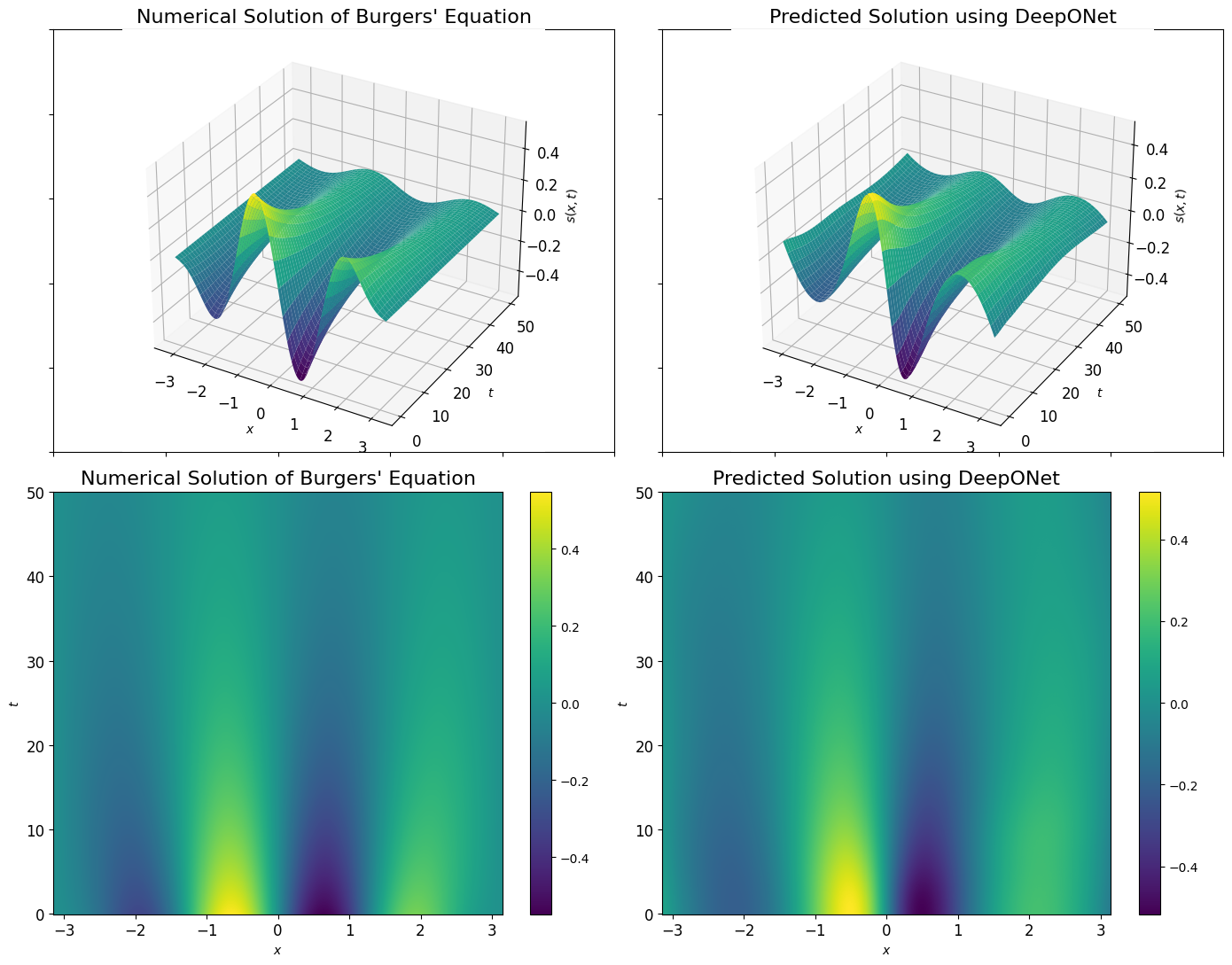}
    \vspace{-1.40em}
        \caption{\small This plot demonstrates the ability of Huber loss trained DeepONets to predict the solution to a Burgers' P.D.E. on an arbitrarily chosen inhomogeneous term $u$, which is different from the $u$'s used for training the net. }
    \label{fig:burgers_tanhDON_huber}
\end{figure}

The plots above motivate future studies exploring the possibility of Huber loss being good for training DeepONets. 

\section{Empirical Study on the Behaviour of Generalization Error for Huber Loss at Different $\delta$}\label{app:varydelta}

\begin{figure}[!h]
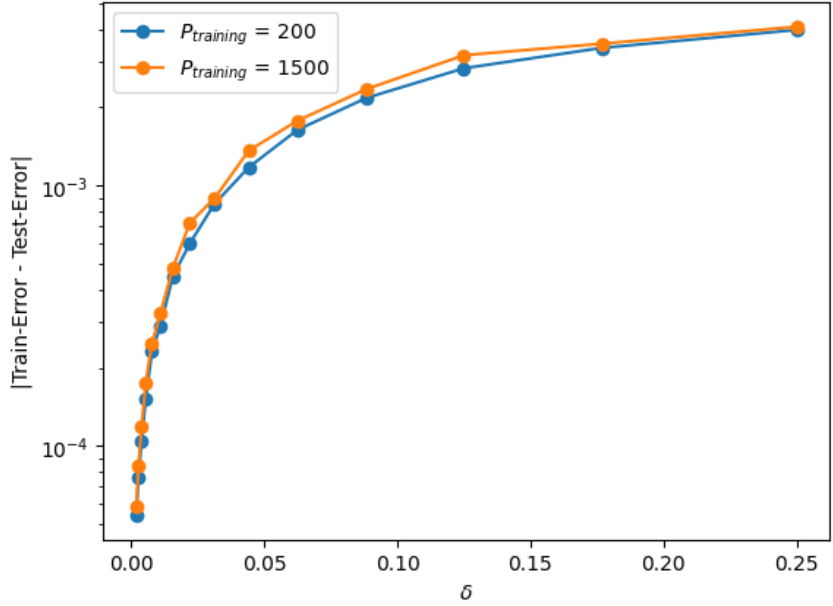

    \centering
    \begin{subfigure}[h]{0.47\textwidth}
        \includesvg[width=\textwidth]{Sections/images/huber_varydelta_generror.svg}
        \label{fig:huber_varydelta_generror}
    \end{subfigure}
    \hfill
    \begin{subfigure}[h]{0.47\textwidth}
        \includesvg[width=\textwidth]{Sections/images/huber_varydelta_testloss.svg}
\label{fig:huber_varydelta_testloss}
    \end{subfigure}
    \vspace{-2.4em}
    \caption{\small These plots demonstrate the behaviour of generalization error and test error for Huber loss trained DeepONets to predict the solution to a Burgers' P.D.E. at varying values of $\delta$ for $2$ different dataset sizes.}
    \label{fig:huber_varydelta}
\end{figure}

Similar to the experiments in Section~\ref{sec:experiments}, in this section too we use DeepONets whose both branch and trunk networks are of depth $3$ and have a width of $100$. The selected training loss is identical to the one in Equation~\ref{eq:exploss}, with $\ell$ set to the Huber loss. The generalization error is plotted for various values of delta at $P_{\text{training}} = 200$ and $1500$, while maintaining $N_{\text{training}} = 400$. In this setup, in Figure \ref{fig:huber_varydelta} we plot the generalization error for $\delta = \frac{1}{(\sqrt{2})^{k}}$ for $k = 4, 5, \hdots ~, 18$ to show how the generalization error as well as the test error drops as $\delta$ gets smaller.

\section{Review of the Universal Approximation Property of DeepONets}\label{app:rltd_DON}
We follow the setup in \cite{sidd_don} for a brief review of its key theorem that motivates DeepONets.  

\begin{definition}[Solution Operator]\label{def:G}
Given $U \subset \R^n$ and $D \subset \R^d$, compact sets with boundaries, suppose we have a differential operator ${\cal L} : H^s(U) \rightarrow C(D)$, where $H^s(U)$ is the $L^2$-based Sobolev space for some $s>0$. Denote the functions in the range of ${\cal L}$ to be  ``forcing'' / ``input'' functions and those in the domain of ${\cal L}$ to be the ``output" functions. Let $\partial U$ denote the boundary of $U$. Then given $g \in L^2(\partial U)$ and $f \in C(D)$ a solution to the differential system $(g,f,{\cal L})$ is a function $u^* \in H^s(U)$ s.t. 
\[{\cal L}u^* = f \text{ s.t } u^* = g \text{ on } \partial U.\] 
At a fixed ``boundary condition'' $g$ for such a differential system, we assume that the solutions for the above system for different forcing functions $f$ are given via an operator/map, $\cal G$ s.t. 
\[ {\cal G} : C(D) \rightarrow H^s(U) \,\,\text{ s.t }\,\, {\cal L} \circ {\cal G}(f) = f \,\, \text{ and }\,\, {\cal G}(f) = g \,\,\text{ on }\,\, \partial U.\] 
Further, assume that $\mu$ is a probability measure on $C(D)$  s.t $\mathcal{G} \in L^{2}(\mu)$ i.e. $\int_{C(D)} \norm{{\cal G}(f)}^2_{H^s(U)} \dd{\mu}(f)  < \infty$. 
\end{definition} 

Multiple examples of ${\cal G}$ have been discussed in \cite{sidd_don}, and bounds on these operators evaluated -- for instance, in Lemma $4.1$ therein one can see the bounds on the ${\cal G}$ that corresponds to the the case of a forced pendulum that we used as a demonstrative example in Section \ref{sec:over}

\begin{definition}[\bf{DeepONet (Version 2)}]

Suppose $\forall\, f \in C(D), \x_B(f)$ is a discretization of $f.$ Further suppose $\mathcal{A}$ is a branch net mapping to $\mathbb{R}^p$ and $\tau : \mathbb{R}^n \mapsto \mathbb{R}^{p+1}$ is a trunk net. Then a ``DeepONet" (${\mathcal{N}} : C(D) \mapsto L^2(U)$) is defined as, \begin{align*}
    \forall \x_T \in U \subset \R^n, \quad   \mathcal{N}(f)(\x_T) \coloneqq \tau_0 (\x_T) + \sum_{k=1}^p \mathcal{A}_k(\x_B({f}))\tau_k \,(\x_T).
\end{align*}
\end{definition}


\begin{theorem}[\bf{Restatement of a Key Result from \cite{sidd_don}}]\label{thm:sid} Let $\mu$ be as in Definition \ref{def:G}. Let $\mathcal{G}$ :
$C(D) \rightarrow L^{2}(U)$ be a Borel measurable mapping, with $\mathcal{G} \in L^{2}(\mu)$, then for every $\epsilon>0$, there exists an operator network $\mathcal{N} : C(D) \rightarrow L^2(U)$ as given above, such that
\[\|\mathcal{G}-\mathcal{N}\|_{L^{2}(\mu)}=\left(\int_{C(D)}\|\mathcal{G}(f)-\mathcal{N}(f)\|_{L^{2}(U)}^{2} d \mu(f)\right)^{1 / 2}<\epsilon.\]
\end{theorem}
{\remark \label{rem:xbf} Henceforth $\x_B \coloneqq \x_B(f)$ for any function $f,$ and similarly $\x_{B,i} $ for a function $f_i.$}

The above approximation guarantee between DeepONets ($\mathcal{N}$) and solution operators of differential equations ($\mathcal{G}$) clearly motivates the use of DeepONets for solving differential equations. In \cite{karnyadakis2022aprox}, the authors present specific scenarios wherein particularly small DeepONets satisfy Theorem~\ref{thm:sid}. And as a counterpoint, in \cite{mukherjee2023size}, scenarios are demonstrated where small DeepONets would not be able to reduce the empirical training error below a certain threshold in the presence of noise in the data.


\section{Proof of Lemma \ref{last_step}}\label{app:last_step}

\begin{proof}\label{proof_last_step}
Denote by $\cS_n = S^{d_1-1} \times S^{d_2-1}.$ Define $\tilde{\mathbf{W}} = (\v_1\v_2^\top)$ and $\tilde{\x}_i = (\x_{B,i}(\x_{T,i})^\top).$ Then,

\begin{align*}
    \E_{\epsilon} \left( \sup_{\substack{(\v_1, \v_2) \,\in\, \cS_n}} {\sum_{i=1}^m \epsilon_i (\v_1^\top \x_{B,i}) (\v_2^\top \x_{T,i})} \right) &= \E_{\epsilon}\,\left(\sup_{\substack{(\v_1, \v_2) \,\in\, \cS_n }}  \sum_{i=1}^{m} \epsilon_i\, \left(\sum_{k_1=1}^{d_1} v_{1,k_1} x_{B,i,k_1}\right) \left(\sum_{k_2=1}^{d_2} v_{2,k_2} x_{T,i,k_2}\right)  \right) \\
    &= \E_{\epsilon}\,\left(\sup_{\substack{(\v_1, \v_2) \,\in\, \cS_n }}   \sum_{i=1}^{m} \sum_{k_1=1}^{d_1}  \sum_{k_2=1}^{d_2} \epsilon_i\, \left(v_{1,k_1} x_{B,i,k_1}\right) \left( v_{2,k_2} x_{T,i,k_2}\right)  \right)\\
    &= \E_{\epsilon}\,\left(\sup_{\substack{(\v_1, \v_2) \,\in\, \cS_n }} \sum_{k_1=1}^{d_1}  \sum_{k_2=1}^{d_2} v_{1,k_1} v_{2,k_2} \left(  \sum_{i=1}^{m} \epsilon_i\, \left(x_{B,i,k_1}\right) \left( x_{T,i,k_2}\right) \right)\right)\\
    \text{ Note that,} (\v_1, \v_2) \,\in\, \cS_n \implies& \norm{\tilde{\mathbf{W}}}_2 = \sqrt{\sum_{k_1=1}^{d_1} \sum_{k_2=1}^{d_2} v_{1,k_1}^2 v_{2,k_2}^2} = \norm{\v_1}_2 \norm{\v_2}_2 = 1.\\
    \text{ Hence, we can enlarge the domain to get,}&\\
    &\leq \E_{\epsilon}\,\left(\sup_{\substack{\norm{\tilde{\mathbf{W}}}_2 \leq 1}} \sum_{k_1=1}^{d_1}  \sum_{k_2=1}^{d_2} \tilde{{W}}_{(k_1, k_2)} \left( \sum_{i=1}^{m} \epsilon_i\, \tilde{x}_{i,k_1, k_2} \right)\right).
    \end{align*}
    The above sum can be viewed as an inner product of 2 vectors in $\R^{d_1\times d_2}.$ We have,
    \begin{align*}
    &= \E_{\epsilon}\,\left(\sup_{\norm{\tilde{\mathbf{W}}}_2 \leq 1}  \big \langle \tilde{\mathbf{W}},  \sum_{i=1}^{m} \epsilon_i\, \tilde{\x}_i \big \rangle \right) \leq \sqrt{\sum_{i=1}^m \norm{\tilde{\x}_i}_2^2} \leq \sqrt{\sum_{i=1}^m \norm{{\x}_{B,i}}_2^2 \norm{{\x}_{T,i}}_2^2},
\end{align*} where the inequality follows from Theorem 5.5 in \cite{cs229m}.
\end{proof}

\section{Proof of Proposition \ref{lem:outerpeel}}\label{outerpeelproof}


\begin{proof} We recall from the setup of Definition \ref{def:lempeel}  that $b_{-1}$ and $t_{-1}$ are the number of rows of the matrices $\B_{q_B -1}$ and $\T_{q_T -1}$ - and $b_{-2}$ and $t_{-2}$ are the output dimensions of the nets $\f_B'$ and $\f_T'$. From Definition \ref{def:lempeel2} we further recall the definitions of the $2$ sets of matrices $\cW(\C_{q_B,q_T,q_{B}-1,q_{T}-1})$ and $\cW_{\rm rest}$ and we simplify the required empirical Rademacher complexity as,  
\begin{align}
    \nonumber m \hat{\mathcal{R}}_m &= \E_{\epsilon} \Bigg [ \sup_{\left ( \cW(\C_{q_B,q_T,q_{B}-1,q_{T}-1}), \cW_{\rm rest} \right ) } \sum_{i=1}^m \epsilon_i \ip{\f_B(\x_{B,i})}{\f_T(\x_{T,i})}\Bigg ]\\
    &=    \E_{\epsilon}  \Bigg [ \sup_{\left ( \cW(\C_{q_B,q_T,q_{B}-1,q_{T}-1}), \cW_{\rm rest} \right )}  \left(   \sum_{i=1}^m \epsilon_i \ip{\B_{q_B} \sigma_1\left(\B_{q_{B}-1} \f'_B(\x_{B,i})\right)}{\T_{q_T} \sigma_2\left(\T_{q_{T}-1} \f'_T(\x_{T,i})\right)} \right ) \Bigg ]. \label{eq:general}
\end{align}
Here, we have substituted the definitions of the functions $\f'_B$ and the $\f'_T$ as given in the statement of Lemma \ref{lem:outerpeel}. To ease notation now we define the following vectors, 
\[ \b_i \coloneqq \sigma_1\left(\B_{q_{B}-1} \f'_B(\x_{B,i})\right), ~\t_i \coloneqq \sigma_2\left(\T_{q_{T}-1} \f'_T(\x_{T,i})\right) .\] 

Then we can rewrite the above as, 
\begin{align*}
    m \hat{\mathcal{R}}_m &=  \E_{\epsilon} \Bigg [ \sup_{{\substack{\left ( \cW(\C_{q_B,q_T,q_{B}-1,q_{T}-1}), \cW_{\rm rest} \right )}}}   \left(  \sum_{i=1}^m \epsilon_i \ip{\B_{q_B} \b_i}{\T_{q_T} \t_i}\right) \Bigg ]\\
    &=    \E_{\epsilon} \Bigg [  \sup_{{\substack{\left ( \cW(\C_{q_B,q_T,q_{B}-1,q_{T}-1}), \cW_{\rm rest} \right )}}}   \left(  \sum_{i=1}^m \epsilon_i \sum_{j=1}^p {\B_{q_B,j}^\top \b_i}\cdot{\T_{q_T,j}^\top \t_i}\right) \Bigg ]\\
    &=     \E_{\epsilon} \Bigg [  \sup_{{\substack{\left ( \cW(\C_{q_B,q_T,q_{B}-1,q_{T}-1}), \cW_{\rm rest} \right )}}}   \left(  \sum_{i=1}^m \epsilon_i \sum_{j=1}^p \left(\sum_{k_1=1}^{b_{-1}} {B_{q_B,j,k_1} b_{i,k_1}}\right) \cdot \left(\sum_{k_2=1}^{t_{-1}} {T_{q_T,j,{k_2}} t_{i,k_2}} \right)\right) \Bigg ]\\
    &=     \E_{\epsilon} \Bigg [  \sup_{{\substack{\left ( \cW(\C_{q_B,q_T,q_{B}-1,q_{T}-1}), \cW_{\rm rest} \right )}}}   \left(  \sum_{j=1}^p \sum_{k_1=1}^{b_{-1}} \sum_{k_2=1}^{t_{-1}} B_{q_B,j,k_1} T_{q_T,j,{k_2}} \sum_{i=1}^m \epsilon_i b_{i,k_1} \cdot t_{i,k_2}\right) \Bigg ]\\
    &=     \E_{\epsilon} \Bigg [  \sup_{{\substack{\left ( \cW(\C_{q_B,q_T,q_{B}-1,q_{T}-1}), \cW_{\rm rest} \right )}}}   \left(  \sum_{j=1}^p \sum_{k_1=1}^{b_{-1}} \sum_{k_2=1}^{t_{-1}} B_{q_B,j,k_1} T_{q_T,j,{k_2}} \sum_{i=1}^m \epsilon_i b_{i,k_1} \cdot t_{i,k_2} \right) \Bigg ]\\
    &= \E_{\epsilon} \Bigg [  \sup_{{\substack{ \cW(\C_{q_B,q_T,q_{B}-1,q_{T}-1}) \\ \cW_{\rm rest}}}}   \Bigg (  \sum_{j=1}^p \sum_{k_1=1}^{b_{-1}} \sum_{k_2=1}^{t_{-1}} (\B_{q_B,j} \T_{q_T,j}^\top)_{k_1, k_2}\\
    & \quad \quad\quad \quad\quad \quad\quad \quad\quad \quad\quad \quad \cdot \sum_{i=1}^m \epsilon_i \sigma_1\left(\B_{q_{B}-1} \f'_B(\x_{B,i})\right)_{k_1} \cdot \sigma_2\left(\T_{q_{T}-1} \f'_T(\x_{T,i})\right)_{k_2} \Bigg) \Bigg ]\\
    &= \E_{\epsilon} \Bigg [  \sup_{{\substack{\cW(\C_{q_B,q_T,q_{B}-1,q_{T}-1}) \\ \cW_{\rm rest}}}}   \Big (  \sum_{j=1}^p \sum_{k_1=1}^{b_{-1}} \sum_{k_2=1}^{t_{-1}} (\B_{q_B,j} \T_{q_T,j}^\top)_{k_1, k_2} \cdot \norm{\B_{q_{B}-1,k_1}} \cdot \norm{\T_{q_{T}-1,k_2}}\\
    &\quad \quad\quad \quad\quad \quad\quad \quad\quad \quad\quad \quad \cdot \sum_{i=1}^m \epsilon_i \sigma_1\left(\hat{\B}_{q_{B}-1,k_1}^\top \f'_B(\x_{B,i})\right) \cdot \sigma_2\left(\hat{\T}_{q_{T}-1,k_2}^\top \f'_T(\x_{T,i})\right) \Big ) \Bigg ].
\end{align*}
In the last line above we have invoked the positive homogeneity of $\sigma_1$ and $\sigma_2$.

Now assume $2$ vectors $\v$ and $\w$ of dimensions $b_{-1}-1 \times t_{-1}-1$ s.t they are indexed by the tuple $(k_1,k_2)$ for $k_1 \in \{1,\ldots,-1+b_{-1}\}$ and $k_2 \in \{1,\ldots,-1+t_{-1}\}$ as follows, 
\begin{align*}
    v_{(k_1,k_2)} &\coloneqq  \norm{\B_{q_{B}-1,k_1}} \cdot \norm{\T_{q_{T}-1,k_2}} \cdot \sum_{j=1}^p (\B_{q_B,j} \T_{q_T,j}^\top)_{k_1, k_2},\\
    w_{(k_1,k_2)} &\coloneqq  \sum_{i=1}^m \epsilon_i  \sigma_1\left(\hat{\B}_{q_{B}-1,k_1}^\top \f'_B(\x_{B,i})\right) \cdot \sigma_2\left(\hat{\T}_{q_{T}-1,k_2}^\top \f'_T(\x_{T,i})\right).
\end{align*} Then we have, 
\begin{align*}
    m \hat{\mathcal{R}}_m &=     \E_{\epsilon} \Bigg [  \sup_{\substack{\left ( \cW(\C_{q_B,q_T,q_{B}-1,q_{T}-1}), \cW_{\rm rest} \right )}}   \left(   \sum_{k_1=1}^{b_{-1}} \sum_{k_2=1}^{t_{-1}} v_{(k_1,k_2)} \cdot w_{(k_1,k_2)} \right ) \Bigg ].
    \end{align*}
    Note that for any $\alpha, \beta \in \R^k \text{ we have,} \ip{\alpha}{\beta} \leq \left ( \max_{p \in 1,\ldots,k} \abs{\beta_p} \right ) \cdot \sum_{j=1}^k \abs{\alpha_j}$
    \begin{align*}
    &\leq     \E_{\epsilon} \Bigg [  \sup_{{\substack{\left ( \cW(\C_{q_B,q_T,q_{B}-1,q_{T}-1}), \cW_{\rm rest} \right )}}}   \left(    \sum_{k_1=1}^{b_{-1}} \sum_{k_2=1}^{t_{-1}} \abs{ v_{(k_1,k_2)}} \cdot  \max_{\substack{k_1=1,\ldots,b_{-1}-1\\k_2=1,\ldots,t_{-1}-1}} \abs{w_{(k_1,k_2)}}  \right ) \Bigg ].
\end{align*}


Now we define a set of weights $\cW_{\rm except-outer}$ which is an union of all possible weight matrices that are allowed in $\cW_{\rm rest}$ and all possible choices of $(\B_{q_{B}-1}, \T_{q_{T}-1} )$ that are allowed in the set $\cW(\C_{q_B,q_T,q_{B}-1,q_{T}-1})$. 

Then we recall the definition of $\cW(\C_{q_B,q_T,q_{B}-1,q_{T}-1})$, to get, \begin{align*}
    m \hat{\mathcal{R}}_m \leq     \E_{\epsilon} \Bigg [  \sup_{\cW_{\rm except-outer}}   \left(   \C_{q_B,q_T,q_{B}-1,q_{T}-1} \cdot  \max_{\substack{k_1=1,\ldots,b_{-1}-1\\k_2=1,\ldots,t_{-1}-1}} \abs{\sum_{i=1}^m \epsilon_i  \sigma_1\left(\hat{\B}_{q_{B}-1,k_1}^\top \f'_B(\x_{B,i})\right) \cdot \sigma_2\left(\hat{\T}_{q_{T}-1,k_2}^\top \f'_T(\x_{T,i})\right)}  \right ) \Bigg ].
\end{align*}
where
\begin{align*}
    \C_{q_B,q_T,q_{B}-1,q_{T}-1} &=  \sum_{k_1=1}^{b_{-1}} \sum_{k_2=1}^{t_{-1}} \abs{v_{(k_1,k_2)}}\\
    &= \sum_{k_1=1}^{b_{-1}} \sum_{k_2=1}^{t_{-1}} \norm{\B_{q_{B}-1,k_1}} \cdot \norm{\T_{q_{T}-1,k_2}}  \abs{ \left [ \sum_{j=1}^p \B_{q_B,j} \T_{q_T,j}^\top \right ]_{k_1, k_2}}
\end{align*}
Note that any pair of row directions in the pair of matrices $(\B_{q_{B}-1},\T_{q_{T}-1})$ is in $\in S^{b_{-2}-1} \times S^{t_{-2}-1}$. So a $\sup$ over the set of $\B_{q_{B}-1} \mbox{ and } \T_{q_{T}-1}$ that is allowed by the constraint of $\C_{q_B,q_T,q_{B}-1,q_{T}-1}$ and a subsequent $\max$ over the pairs of row directions can be upperbounded by a single $\sup$ over $S^{b_{-2}-1} \times S^{t_{-2}-1}$. Thus we have, 
\begin{align*}
    m \hat{\mathcal{R}}_m &\leq     \E_{\epsilon} \Bigg [ \sup_{ \substack{(\v,\w) \in S^{b_{-2}-1} \times S^{t_{-2}-1} \\ \cW_{\rm rest} }}   \Bigg ( \C_{q_B,q_T,q_{B}-1,q_{T}-1} \cdot   \abs{\sum_{i=1}^m \epsilon_i \sigma_1 \left(\v^\top \f'_B(\x_{B,i})\right) \cdot \sigma_2\left( \w^\top \f'_T(\x_{T,i}) \right)} \Bigg ) \Bigg ].
\end{align*} Invoking Lemma \ref{lem:babytala}, we have 
\begin{align*}
    m \hat{\mathcal{R}}_m &\leq     \E_{\epsilon} \Bigg [ 2 \times \sup_{ \substack{(\v,\w) \in S^{b_{-2}-1} \times S^{t_{-2}-1} \\ \cW_{\rm rest} }}  \Bigg (  \C_{q_B,q_T,q_{B}-1,q_{T}-1} \cdot   \sum_{i=1}^m \epsilon_i \sigma_1 \left(\v^\top \f'_B(\x_{B,i})\right) \cdot \sigma_2\left( \w^\top \f'_T(\x_{T,i}) \right) \Bigg ) \Bigg ].
\end{align*} Now we invoke Lemma \ref{lem:contraction}, assuming that Assumption~\ref{asm:activ} holds for $\sigma_1$ and $\sigma_2$ for some $L > 0$, to get,
\begin{align*}
    m \hat{\mathcal{R}}_m &\leq 2L \cdot  \C_{q_B,q_T,q_{B}-1,q_{T}-1} \cdot    \E_{\epsilon} \Bigg [ \sup_{ \substack{(\v,\w) \in S^{b_{-2}-1} \times S^{t_{-2}-1} \\ \cW_{\rm rest} }}   \Bigg (   \sum_{i=1}^m \epsilon_i \v^\top \f'_B(\x_{B,i}) \cdot  \w^\top \f'_T(\x_{T,i})  \Bigg ) \Bigg ]. \label{eq:afterstrip}
\end{align*} The above inequality is exactly what we set out to prove. 
\end{proof}

\section{Proof of Proposition \ref{lem:peel}}\label{proof:peel}
\begin{proof}
We start with the expression in the R.H.S. of the Proposition \ref{lem:outerpeel}  and simplify it similarly as was done in its proof in the previous appendix.  Denote $\cS_2 = S^{b_{-2}-1} \times S^{t_{-2}-1}.$
\begin{align*}
    &\E_{\epsilon} \Bigg [ \sup_{ \substack{(\v,\w) \,\in\, \cS_2 \\ \cW_{\rm rest} }}   \Bigg (   \sum_{i=1}^m \epsilon_i \v^\top \f'_B(\x_{B,i}) \cdot  \w^\top \f'_T(\x_{T,i})  \Bigg ) \Bigg ]\\
    = &\E_{\epsilon} \Bigg [ \sup_{ \substack{(\v,\w) \,\in\, \cS_2 \\ \cW_{\rm rest} }}   \Bigg (   \sum_{i=1}^m \epsilon_i \v^\top \sigma_1(\B_{q_B-2} \f_B''(\x_{B,i})) \cdot  \w^\top \sigma_2(\T_{q_T-2} \f''_T(\x_{T,i})) \Bigg ) \Bigg ]\\
    = &\E_{\epsilon} \left[\sup_{ \substack{(\v,\w) \,\in\, \cS_2 \\ \cW_{\rm rest} }}   \sum_{j_1 = 1}^{b_{-2}} \sum_{j_2 = 1}^{t_{-2}} (\v \w^\top)_{j_1,j_2} \norm{\B_{q_B-2,j_1}} \norm{\T_{q_T-2,j_2}} \cdot 
    \sum_{i=1}^m \epsilon_i \sigma_1(\hat{\B}_{q_B-2,j_1}^\top \f''_B(\x_{B,i})) \sigma_2(\hat{\T}_{q_T-2, j_2}^\top \f_T''(\x_{T,i}))\right].
\end{align*} 
Define
\[\tilde{v}_{j_1, j_2} \coloneqq (\v \w^\top)_{j_1,j_2} \norm{\B_{q_{B}-2,j_1}} \norm{\T_{q_T-2,j_2}}\]
\[\tilde{w}_{j_1, j_2} \coloneqq \sum_{i=1}^m \epsilon_i \sigma_1(\hat{\B}_{q_B-2,j_1}^\top \f''_B(\x_{B,i})) \sigma_2(\hat{\T}_{q_T-2, j_2}^\top \f_T''(\x_{T,i})).\]
Thus we get, 
\begin{align*}
    \E_{\epsilon} \Bigg [ \sup_{ \substack{(\v,\w) \,\in\, \cS_2 \\ \cW_{\rm rest} }}   \Bigg (   \sum_{i=1}^m \epsilon_i \v^\top \f'_B(\x_{B,i}) \cdot  \w^\top \f'_T(\x_{T,i})  \Bigg ) \Bigg ] &= \E_{\epsilon} \left[\sup_{ \substack{(\v,\w) \,\in\, \cS_2 \\ \cW_{\rm rest} }} \sum_{j_1 = 1}^{b_{-2}} \sum_{j_2 = 1}^{t_{-2}} \tilde{v}_{j_1, j_2} \tilde{w}_{j_1, j_2} \right]\\
    &\leq \E_{\epsilon} \left[\sup_{ \substack{(\v,\w) \,\in\, \cS_2 \\ \cW_{\rm rest} }} \left( \sum_{j_1 = 1}^{b_{-2}} \sum_{j_2 = 1}^{t_{-2}} \abs{\tilde{v}_{j_1, j_2}}\cdot\max_{j_1,j_2}\abs{\tilde{w}_{j_1, j_2}} \right) \right].
    \end{align*}
For ease of notation we define the set $\cW_{-2,-2}$ as the set of matrices allowed in $\cW_{rest}$ but
with those with indices $q_B-2$ and $q_T-2$
additionally satisfying the constraint in equation \ref{eq:c22} in the statement of the lemma.      \begin{align*}
    &\E_{\epsilon} \Bigg [ \sup_{ \substack{(\v,\w) \,\in\, \cS_2 \\ \cW_{\rm rest} }}   \Bigg (   \sum_{i=1}^m \epsilon_i \v^\top \f'_B(\x_{B,i}) \cdot  \w^\top \f'_T(\x_{T,i})  \Bigg ) \Bigg ]\\
    &\leq \C_{-2,-2}\E_{\epsilon} \left[\sup_{\mathcal{W}_{-2,-2}} \left( \max_{\substack{j_1=1,...q_B-2\\j_2=1,...,q_T-2}}\abs{\tilde{w}_{j_1, j_2}} \right) \right]\\
    &\leq \C_{-2,-2}\E_{\epsilon}  \left( \sup_{\mathcal{W}_{-2,-2}}\left( \max_{\substack{j_1=1,...q_B-2\\j_2=1,...,q_T-2}}\abs{\sum_{i=1}^m \epsilon_i \sigma_1(\hat{\B}_{q_B-2,j_1}^\top \f''_B(\x_{B,i})) \sigma_2(\hat{\T}_{q_T-2, j_2}^\top \f_T''(\x_{T,i}))}\right) \right).
    \end{align*}
We note that any pair of row directions in the pair of matrices $(\B_{q_{B}-2},\T_{q_{T}-2})$ is $\in S^{b_{-3}-1} \times S^{t_{-3}-1}$. So a $\sup$ over the set of $\B_{q_{B}-2} \mbox{ and } \T_{q_{T}-2}$ that is allowed by the constraint of $\C_{-2,-2}$ and a subsequent $\max$ over the pairs of row directions can be upper-bounded by a single $\sup$ over $S^{b_{-3}-1} \times S^{t_{-3}-1}$.

We recall the definition of $\cW'_{rest}$ given in the lemma to conclude, ($\cS_3 = S^{b_{-3}-1} \times S^{t_{-3}-1}$)
    \begin{align*}
    &\E_{\epsilon} \Bigg [ \sup_{ \substack{(\v,\w) \,\in\, \cS_2 \\ \cW_{\rm rest} }}   \Bigg (   \sum_{i=1}^m \epsilon_i \v^\top \f'_B(\x_{B,i}) \cdot  \w^\top \f'_T(\x_{T,i})  \Bigg ) \Bigg ]\\
    &\leq \C_{-2, -2}\E_{\epsilon}  \left( \sup_{\substack{(\v, \w) \,\in\, \cS_3 \\ \cW'_{rest}}}\abs{\sum_{i=1}^m \epsilon_i \sigma_1(\v^\top \f''_B(\x_{B,i})) \sigma_2(\w^\top \f_T''(\x_{T,i}))} \right).
    \end{align*} Invoking Lemma \ref{lem:babytala},
    \begin{align*}
    &\leq 2\C_{-2,-2}\E_{\epsilon}  \left( \sup_{\substack{(\v, \w) \,\in\, \cS_3 \\ \cW'_{rest}}} {\sum_{i=1}^m \epsilon_i \sigma_1(\v^\top \f''_B(\x_{B,i})) \sigma_2(\w^\top \f_T''(\x_{T,i}))} \right)\\
    &\leq 2L\C_{-2,-2}\E_{\epsilon}  \left( \sup_{\substack{(\v, \w) \,\in\, \cS_3 \\ \cW'_{rest}}} {\sum_{i=1}^m \epsilon_i (\v^\top \f''_B(\x_{B,i})) (\w^\top \f_T''(\x_{T,i}))} \right).
\end{align*}
In the last step above we have invoked Lemma \ref{lem:contraction} using the fact that Assumption ~\ref{asm:activ} is true about $\sigma_1$ and $\sigma_2$.
\end{proof}

\section{Generalization Bound For DeepONets For Unbounded Data}\label{app:generalize}


\begin{definition}[$\phi_\mathcal{H}$ Loss Function Class]\label{phi_class}
Given $\mathcal H$ as defined in Theorem \ref{thm:genbound}, we define the class $\phi_\mathcal{H}$ associated with $\mathcal{H}$ as,
\begin{align*}
     \phi_\mathcal{H} \coloneqq \{ \mathcal{F} \times \R^{d_2} \ni (f, \mathbf{x}_T) \mapsto \ell(h(f, \mathbf{x}_T)) \in [0,\infty) \mid h \in \mathcal{H} \}
\end{align*}
where $\ell$ is some non-negative univariate (loss) function

\end{definition} 

Applying Theorem 4.13 of \cite{cs229m} to the above loss class we get the following theorem
\begin{theorem}\label{app:thm413}
    Given $\phi_{\mathcal{H}}$ as in Definition \ref{phi_class} we have,
    \begin{align*}
        \E_{\{(f_i,\x_{T,i})\stackrel{iid}{\sim}\mathcal{D}\mid i = 1\ldots m\}} \left[ \sup_{h\in \mathcal{H}} \left[ \frac{1}{m} \sum_{i=1}^m \ell(h(f_i,\x_{T,i})) - \E_{(f,\x_{T})\sim\mathcal{D}} \left[ \ell(h(f,\x_{T})) \right]
 \right] \right] \leq 2\mathcal{R}_m(\phi_{\mathcal{H}})
    \end{align*}
    where $m$ is the number of samples used in the empirical risk, $\mathcal{D}$ is a distribution over $\mathcal{F} \times \R^{d_2}$, and $\mathcal{R}_m$ is Rademacher complexity as defined in Equation \ref{def:rad:2}.
\end{theorem}



\begin{lemma}[Lipschitz Composition in Rademacher Complexity]\label{lem:lipcomp}
Assume that $\phi:\R \mapsto \R$ is a $L_{\phi}$-Lipschitz continuous function, such that $\abs{\phi(t)-\phi(s)} \leq L_{\phi}\abs{t-s}.$ Suppose $\mathcal{V} \subset \R^m.$ Then,
\begin{align*}
    \E_{\epsilon} \left[\sup_{f \in \mathcal{V}} \frac{1}{m}\sum_{i=1}^m \epsilon_i \phi(f_i)\right] \leq L_{\phi}\,\E_{\epsilon} \left[\sup_{f \in \mathcal{V}} \frac{1}{m} \sum_{i=1}^m \epsilon_i f_i\right].
\end{align*}
\end{lemma}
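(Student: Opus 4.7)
The plan is to apply the standard one-coordinate peeling argument underlying the Ledoux--Talagrand contraction principle. The factor $1/m$ factors out of the supremum and the Rademacher expectation, so it suffices to establish the inequality for the un-normalized Rademacher sums. By replacing $\phi$ with $\phi/L_\phi$ I may further reduce to the case $L_\phi = 1$; the factor $L_\phi$ will be reinstated on the right-hand side at the end by undoing the rescaling.

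The crux is a one-coordinate contraction: for each $i \in \{1,\dots,m\}$ and each function $u:\mathcal{F}\to\R$ not depending on $f_i$,
\[
\E_{\epsilon_i}\!\left[\sup_{f\in\mathcal{F}} \big(u(f)+\epsilon_i\,\phi(f_i)\big)\right] \;\leq\; \E_{\epsilon_i}\!\left[\sup_{f\in\mathcal{F}} \big(u(f)+\epsilon_i\,f_i\big)\right].
\]
To prove this I would average over $\epsilon_i\in\{-1,+1\}$ and rewrite the left-hand side as $\tfrac12 \sup_{f,g\in\mathcal{F}} \big(u(f)+u(g)+\phi(f_i)-\phi(g_i)\big)$. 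For any pair $(f,g)$ the Lipschitz hypothesis yields $\phi(f_i)-\phi(g_i) \leq |f_i-g_i| = \max(f_i-g_i,\, g_i-f_i)$. Splitting the double supremum into the two sub-cases determined by which of the arguments to the max is attained, and renaming $f\leftrightarrow g$ inside the second sub-case, collapses both pieces into $\sup_{f,g}\big(u(f)+u(g)+(f_i-g_i)\big)$. Separating the $f$ and $g$ suprema shows that this last quantity equals $2\,\E_{\epsilon_i}[\sup_f(u(f)+\epsilon_i f_i)]$, which delivers the one-coordinate inequality.

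Having this one-step bound, I would iterate it over $i=1,2,\dots,m$ via the tower property. At step $i$, I condition on the other Rademacher variables $\epsilon_1,\dots,\epsilon_{i-1},\epsilon_{i+1},\dots,\epsilon_m$ and apply the one-coordinate contraction with
\[
u(f) \;=\; \sum_{j<i}\epsilon_j\, f_j \;+\; \sum_{j>i}\epsilon_j\,\phi(f_j),
\]
thereby replacing $\epsilon_i\,\phi(f_i)$ by $\epsilon_i\,f_i$ inside the expectation. After $m$ such applications every occurrence of $\phi$ has been stripped out. Dividing back through by the $1/m$ and restoring the $L_\phi$ from the initial rescaling gives exactly the inequality stated in Lemma \ref{lem:lipcomp}.

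The principal obstacle, and the only place careful attention is required, is the algebraic step inside the one-coordinate contraction: namely, using the sup-symmetry (renaming $f\leftrightarrow g$) to absorb the non-linear expression $\max(f_i-g_i,\,g_i-f_i)$ into a single linear term $(f_i-g_i)$ without losing a factor. Once the one-coordinate bound is in place, the iteration is a routine conditional-expectation calculation, and the rescaling to $L_\phi=1$ is purely cosmetic.
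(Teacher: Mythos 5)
Your proof is correct: the one-coordinate symmetrization, the Lipschitz bound $\phi(f_i)-\phi(g_i)\le |f_i-g_i|$, the $f\leftrightarrow g$ relabelling that absorbs the absolute value into a single linear term, and the coordinate-by-coordinate iteration via the tower property constitute exactly the standard Ledoux--Talagrand contraction argument, and the rescaling to $L_\phi=1$ is harmless. The paper itself states Lemma \ref{lem:lipcomp} without proof as a known result, but your argument mirrors precisely the ``open up the expectation on $\epsilon_1$ and re-iterate'' peeling the paper uses to prove its product variant, Lemma \ref{lem:contraction}, in Appendix \ref{proof:contraction1}, so in substance this is the same approach.
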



\begin{lemma}[$\mathcal{R}(\ell(\mathcal{H}))$ in terms of $\mathcal{R}(\mathcal{H})$]\label{lem:rlh} Given $\mathcal{H}$ as defined in Theorem \ref{thm:genbound}, if $\ell(\cdot)$ is $R-$Lipschitz,
\[\E_{\epsilon} \left[\sup_{h \in \mathcal{H}} \frac{1}{m}\sum_{i=1}^m \epsilon_i\, \ell(h(f_i,\x_{T,i}))\right] = \mathcal{R}(\ell(\mathcal{H})) \leq R\cdot\mathcal{R}(\mathcal{H}) = R \cdot \E_{\epsilon} \left[\sup_{h \in \mathcal{H}} \frac{1}{m}\sum_{i=1}^m \epsilon_i\, h(f_i,\x_{T,i})\right].\]

\end{lemma}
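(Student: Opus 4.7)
The plan is to obtain Lemma \ref{lem:rlh} as an immediate corollary of Lemma \ref{lem:lipcomp} (the standard Lipschitz-composition bound for Rademacher complexities) that has just been recorded above. The key observation is that both sides of the claimed inequality are empirical Rademacher averages computed against the same fixed sample $\{(f_i,\x_{T,i})\}_{i=1}^m$, so the problem can be reduced to a purely finite-dimensional one.

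With the sample held fixed, each $h \in \mathcal{Q}$ is identified with the tuple $\bigl(h(f_1,\x_{T,1}),\ldots,h(f_m,\x_{T,m})\bigr) \in \R^m$. Under this identification $\mathcal{Q}$ becomes a subset $\widetilde{\mathcal{Q}} \subset \R^m$, and the left-hand side of the lemma is exactly
\[
\E_{\epsilon}\left[\sup_{u \in \widetilde{\mathcal{Q}}}\frac{1}{m}\sum_{i=1}^m \epsilon_i\,\ell_2(u_i)\right],
\]
while the right-hand side is $R \cdot \E_{\epsilon}\left[\sup_{u \in \widetilde{\mathcal{Q}}}\frac{1}{m}\sum_{i=1}^m \epsilon_i\,u_i\right]$. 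Applying Lemma \ref{lem:lipcomp} with $\phi := \ell_2$ and $L_\phi = R$ immediately yields the desired inequality. If the average (rather than empirical) version is required, one takes an expectation over the random sample on both sides, which preserves the inequality.

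The only technical footnote worth mentioning is a domain issue: $\ell_2$ in the paper is defined on $[-R,R]$ rather than on all of $\R$, whereas Lemma \ref{lem:lipcomp} is stated for $\phi:\R \to \R$. This is not an obstacle, because for $h \in \mathcal{H}$ the values $h(f_i,\x_{T,i})$ lie inside this interval thanks to the boundedness hypotheses made in Theorem \ref{thm:gen} (the solution-operator bound in Assumption \ref{ass:Gsup}, together with the bounded-input and bounded-weight assumptions on the DeepONet). Hence $\ell_2$ can be extended, without altering any of the suprema, to an $R$-Lipschitz function on all of $\R$, and Lemma \ref{lem:lipcomp} applies verbatim. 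There is no substantive obstacle — the lemma is genuinely a direct application of Talagrand's contraction principle to the DeepONet loss class.
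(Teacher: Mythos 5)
Your proof is correct and follows essentially the same route as the paper: identify each $h \in \mathcal{Q}$ with its value vector on the fixed sample (the paper's set $\mathcal{K} \subset \R^m$, your $\widetilde{\mathcal{Q}}$) and apply Lemma \ref{lem:lipcomp} with Lipschitz constant $R$. Your extra remark on extending $\ell_2$ from $[-R,R]$ to an $R$-Lipschitz function on $\R$ is a harmless refinement the paper leaves implicit.
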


\begin{proof}
Any function $h \in \mathcal{H}$ maps from $\mathcal{F} \times \R^{d_2} \mapsto \R.$ Hence, training data of size $m$ for the DeepONet is of the form $\{(f_i, \x_{T,i}) \in \mathcal{F} \times \R^{d_2},\, i=1,2,\ldots,m\}.$

Consider the set $\mathcal{K} = \{\left(h(f_1,\x_1),\ldots,h(f_m,\x_m)\right) \,|\, h \in \mathcal{H}\} \subset \R^m.$ Then we can re-write,
\[\mathcal{R}(\ell(\mathcal{H})) = \E_{\epsilon} \left[\sup_{k \in {\mathcal{K}}} \frac{1}{m}\sum_{i=1}^m \epsilon_i\, \ell(k_i)\right].\] Invoking that the Lipschitz constant for $\ell(\cdot)$ is $R,$ and Lemma \ref{lem:lipcomp} we have,
\begin{align*}
    \mathcal{R}(\ell(\mathcal{H})) &\leq R\,\E_{\epsilon} \left[\sup_{k \in {\mathcal{K}}} \frac{1}{m}\sum_{i=1}^m \epsilon_i\, k_i\right],
\end{align*} which we recognize as the inequality we set out to prove,
\[\mathcal{R}(\ell(\mathcal{H})) \leq R\cdot\mathcal{R}(\mathcal{H}).\]
\end{proof}

\begin{lemma}[$\mathcal{R}(\mathcal{H})$ in terms of $\mathcal{R}(\DON)$] Recall that $\mathcal{W}$ is the set of allowed weights in the function class $\mathcal{H}$ (as defined in Theorem \ref{thm:genbound}). Corrresponding to it define the class $\DON$ s.t $\DON_{\w} \in \DON$ denotes a DeepONet with weight $\w \in \mathcal{W}$. Then the following holds,
\[\mathcal{R}(\DON) \coloneqq \E_{\epsilon} \left[\sup_{\w \in \mathcal{W}} \frac{1}{m}\sum_{i=1}^m \epsilon_i\, \DON_{\w}(f_i,\x_{T,i})\right] =  \mathcal{R}(\mathcal{H}).\]
\end{lemma}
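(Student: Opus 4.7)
The plan is to start from $\mathcal{R}(\mathcal{H})$ and exploit the fact that each $h \in \mathcal{H}$ decomposes additively as $h(f,\x_T) = \cG(f)(\x_T) - \DON_\w(\x_B(f),\x_T)$, with only the second summand depending on the weight $\w$ that is optimized in the supremum. Writing out the Rademacher sum and using the elementary identity $\sup_\w (A + B(\w)) = A + \sup_\w B(\w)$ whenever $A$ does not depend on $\w$, one can split
\[
m\,\mathcal{R}(\mathcal{H}) = \E_{\epsilon}\!\left[\sum_{i=1}^m \epsilon_i \cG(f_i)(\x_{T,i})\right] + \E_{\epsilon}\!\left[\sup_{\w \in \cW} \left(-\sum_{i=1}^m \epsilon_i \DON_\w(\x_{B,i}, \x_{T,i})\right)\right].
\]
The first term vanishes because $\E_{\epsilon}[\epsilon_i]=0$ and the factors $\cG(f_i)(\x_{T,i})$ are deterministic with respect to the Rademacher expectation.

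For the second term, the plan is to invoke the joint-distribution symmetry $(\epsilon_1,\ldots,\epsilon_m) \stackrel{d}{=} (-\epsilon_1,\ldots,-\epsilon_m)$. Substituting $\epsilon_i' = -\epsilon_i$ and using that $\epsilon'$ has the same distribution as $\epsilon$ flips the global sign inside the expectation without changing its value, giving
\[
\E_{\epsilon}\!\left[\sup_{\w \in \cW} \left(-\sum_{i=1}^m \epsilon_i \DON_\w(\x_{B,i}, \x_{T,i})\right)\right] = \E_{\epsilon}\!\left[\sup_{\w \in \cW} \sum_{i=1}^m \epsilon_i \DON_\w(\x_{B,i}, \x_{T,i})\right] = m\,\mathcal{R}(\DON).
\]
Combining these two observations yields $\mathcal{R}(\mathcal{H}) = \mathcal{R}(\DON)$ as claimed.

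I do not anticipate any technical obstacle, as the whole argument is just the standard translation-invariance-plus-sign-flip trick for Rademacher averages. The only substantive remark is conceptual: the lemma formalizes the fact that the (unknown) ground-truth operator $\cG$ enters $\mathcal{H}$ only additively in each $h$ and so contributes nothing to the capacity. This is precisely what is needed so that the bound in Theorem~\ref{thm:absdon} (which is stated for DeepONet classes, not for $\mathcal{H}$) can be chained, via Lemma~\ref{lem:rlh}, into the generalization error bound of Theorem~\ref{thm:gen}.
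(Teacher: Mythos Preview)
Your proposal is correct and follows essentially the same approach as the paper: both proofs split off the $\cG$-part from the supremum using its independence of $\w$, kill it via $\E_\epsilon[\epsilon_i]=0$, and then invoke the sign-symmetry $\epsilon \stackrel{d}{=} -\epsilon$ to remove the minus sign in front of the $\DON_\w$ term.
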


\begin{proof} 
Realizing that taking a $\sup$ over $h \in \mathcal{H}$ and $\w \in \cW$ are equivalent, we get,
\[\E_{\epsilon} \left[\sup_{h \in {\mathcal{H}}} \frac{1}{m}\sum_{i=1}^m \epsilon_i\, h(f_i,\x_{T,i})\right] = \E_{\epsilon} \left[\sup_{{\w} \in {\mathcal{W}}} \frac{1}{m}\sum_{i=1}^m \epsilon_i\, \left(\mathcal{G}(f_i,\x_{T,i}) - \DON_{\w}(f_i, \x_{T,i})\right)\right],\] where we have used $h(f_i,\x_{T,i}) = \cG(f_i,\x_{T,i}) - \DON_{\w}(f_i,\x_{T,i}).$
We can write $\mathcal{R}(\mathcal{H})$ as
\begin{align*}
    \mathcal{R}(\mathcal{H}) &= \E_{\epsilon} \left[\sup_{{\w} \in {\mathcal{W}}} \frac{1}{m}\sum_{i=1}^m \epsilon_i \left(\mathcal{G}(f_i,\x_{T,i}) - \DON_{\w}(f_i,\x_{T,i})\right)\right]\\
    &= \E_{\epsilon} \left[\sup_{{\w} \in {\mathcal{W}}} \left\{\frac{1}{m}\sum_{i=1}^m \epsilon_i\, \mathcal{G}(f_i,\x_{T,i}) - \frac{1}{m}\sum_{i=1}^m \epsilon_i\, \DON_{\w}(f_i,\x_{T,i})\right\}\right]\\
    &= \E_{\epsilon} \left[ \frac{1}{m}\sum_{i=1}^m \epsilon_i\, \mathcal{G}(f_i,\x_{T,i}) + \sup_{{\w} \in {\mathcal{W}}} \frac{1}{m}\sum_{i=1}^m (-\epsilon_i)\, \DON_{\w}(f_i,\x_{T,i})\right]\\
    &= 0 + \E_{\epsilon} \left[ \sup_{{\w} \in {\mathcal{W}}} \frac{1}{m}\sum_{i=1}^m \epsilon_i\, \DON_{\w}(f_i,\x_{T,i})\right]\\
    &= \mathcal{R}(\DON).
\end{align*} since, $\epsilon = -\epsilon$ in distribution.
\end{proof}

Hence combining the above two lemmas we get,

\begin{lemma}[$\mathcal{R}(\ell(\mathcal{H}))$ in terms of $\mathcal{R}(\DON)$]\label{lem:finalrad}
Let $\ell(\cdot)$ be $R-$Lipschitz. Then,
\[\mathcal{R}(\ell(\mathcal{H})) \leq R\cdot\mathcal{R}(\DON).\]
\end{lemma}


\begin{proposition}[{\bf Generalization Error Bound for DeepONet}]\label{thm:gen} 
Given $\phi_\mathcal{H}$ as in Definition \ref{phi_class} and let $\ell(\cdot)$ be $R-$Lipschitz. Then we have the following generalization bound,
\begin{align*}
 \E_{\{(f_i,\x_{T,i})\stackrel{iid}{\sim}\mathcal{D}~\mid i = 1\ldots m\}} \left[ \sup_{h\in \mathcal{H}} \left[ \frac{1}{m} \sum_{i=1}^m \ell(h(f_i,\x_{T,i})) - \E_{(f,\x_{T})\sim\mathcal{D}} \left[ \ell(h(f,\x_{T})) \right]
 \right] \right] \leq 2R\cdot\mathcal{R}_m
\end{align*}
where $\mathcal{R}_m$ is as in Theorem $\ref{thm:absdon}$, and $\mathcal{D}$ is a distribution over $\mathcal{F} \times \R_{d_2}$
\end{proposition}

\begin{proof}
We recall from Theorem~\ref{app:thm413} that,
\begin{align*}
    \E_{\{(f_i,\x_{T,i})\stackrel{iid}{\sim}\mathcal{D}\mid i = 1\ldots m\}} \left[ \sup_{h\in \mathcal{H}} \left[ \frac{1}{m} \sum_{i=1}^m \ell(h(f_i,\x_{T,i})) - \E_{(f,\x_{T})\sim\mathcal{D}} \left[ \ell(h(f,\x_{T})) \right]
 \right] \right] \leq 2\mathcal{R}_m(\phi_{\mathcal{H}})
\end{align*}
Now, invoking Lemma~\ref{lem:finalrad} and using the fact that $\ell(\cdot)$ is $R-$Lipschitz we get the claimed inequality. 
\end{proof}

\section{Contraction Lemmas} 

\begin{lemma}\label{lem:babytala} 
(From \cite{cs229m}) Let $\epsilon \sim {\rm Uniform}(\{1,-1\}^m)$ and suppose we have functions $\f_\theta$ and $g_\theta$ parameterized by $\theta$ s.t, $(\R^k )^m \ni \x \mapsto \f_{\theta}(\x)=\left(g_\theta\left(x_1\right), \ldots, g_\theta\left(x_m) \right)\right) \in \R^m$. Suppose that for any $\epsilon \in\{\pm 1\}^{m}$, $\sup_{\theta}\left\langle\epsilon, \f_{\theta}(x)\right\rangle \geq 0$. Then,
\begin{align*}
    \mathbb{E}_{\epsilon}\left[\sup_{\theta} \abs{\left\langle\epsilon, \f_{\theta}(\x)\right\rangle}\right] \leq 2 \mathbb{E}_{\epsilon}\left[\sup _{\theta}\left\langle\epsilon, \f_{\theta}(\x)\right\rangle\right].
\end{align*}
\end{lemma}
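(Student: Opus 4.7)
The plan is to use the elementary identity $|a| = \max(a,-a)$ together with the symmetry of the Rademacher distribution under negation. First, for every fixed $\epsilon \in \{\pm 1\}^m$ and every $\theta$, one has $|\langle \epsilon, \f_\theta(\x)\rangle| = \max\!\bigl(\langle \epsilon, \f_\theta(\x)\rangle,\, \langle -\epsilon, \f_\theta(\x)\rangle\bigr)$. Taking $\sup_\theta$ of both sides, and using the standard fact that $\sup_\theta \max(a_\theta,b_\theta) = \max(\sup_\theta a_\theta,\sup_\theta b_\theta)$ (which follows from the two-sided pointwise bounds $\max(a_\theta,b_\theta)\geq a_\theta, b_\theta$ and $\max(a_\theta,b_\theta)\leq \max(\sup a,\sup b)$), I get
\[
\sup_\theta \bigl|\langle \epsilon, \f_\theta(\x)\rangle\bigr| \;=\; \max\!\Bigl(\sup_\theta \langle \epsilon, \f_\theta(\x)\rangle,\; \sup_\theta \langle -\epsilon, \f_\theta(\x)\rangle\Bigr).
\]

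Next, I would use the hypothesis to replace this pointwise $\max$ by a sum. By assumption, $\sup_\theta \langle \epsilon, \f_\theta(\x)\rangle \geq 0$ for every $\epsilon \in \{\pm 1\}^m$. Applying the same hypothesis to $-\epsilon$ (which also lies in $\{\pm 1\}^m$) shows that both quantities inside the $\max$ are non-negative, and for non-negative reals $x,y$ we have $\max(x,y)\leq x+y$. Hence
\[
\sup_\theta \bigl|\langle \epsilon, \f_\theta(\x)\rangle\bigr| \;\leq\; \sup_\theta \langle \epsilon, \f_\theta(\x)\rangle \;+\; \sup_\theta \langle -\epsilon, \f_\theta(\x)\rangle.
\]

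Finally, I would take $\mathbb{E}_\epsilon$ on both sides and exploit the fact that $\epsilon$ and $-\epsilon$ have the same distribution (both are $\operatorname{Uniform}(\{\pm 1\}^m)$), so that $\mathbb{E}_\epsilon \sup_\theta \langle -\epsilon, \f_\theta(\x)\rangle = \mathbb{E}_\epsilon \sup_\theta \langle \epsilon, \f_\theta(\x)\rangle$. Adding the two equal terms yields exactly the factor of $2$ in the stated inequality. I do not foresee any real obstacle: the only step requiring a line of justification is the interchange $\sup \circ \max = \max \circ \sup$, and the non-negativity hypothesis is used in precisely one place — to upgrade $\max(x,y)$ to $x+y$ after distributing the supremum.
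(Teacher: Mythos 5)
Your proposal is correct and is essentially the paper's own argument in a slightly different dress: the paper writes $\abs{z}=\phi(z)+\phi(-z)$ with $\phi=\relu$, bounds $\sup_\theta$ of the sum by the sum of suprema, and uses the non-negativity hypothesis to drop the $\phi$'s, whereas you write $\abs{z}=\max(z,-z)$, interchange $\sup$ and $\max$, and use non-negativity to bound the $\max$ by the sum. Both routes then finish identically by the distributional symmetry $\epsilon \overset{d}{=} -\epsilon$, so there is no gap and nothing substantively different to compare.
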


\begin{proof}
Letting $\phi$ be the ReLU function, the lemma's assumption implies that $\sup_{\theta} \phi\left(\left\langle\epsilon, \vf_{\theta}(\vx)\right\rangle\right)=\sup _{\theta}\left\langle\epsilon, \vf_{\theta}(\vx)\right\rangle$ for any $\epsilon \in\{\pm 1\}^{n}$. Observing that $|z|=\phi(z)+\phi(-z)$,
\begin{align*}
\sup _{\theta}\left|\left\langle\epsilon, \f_{\theta}(x)\right\rangle\right| &=\sup _{\theta}\left[\phi\left(\left\langle\epsilon, \f_{\theta}(x)\right\rangle\right)+\phi\left(\left\langle-\epsilon, \f_{\theta}(x)\right\rangle\right)\right] \\
& \leq \sup _{\theta} \phi\left(\left\langle\epsilon, \f_{\theta}(x)\right\rangle\right)+\sup _{\theta} \phi\left(\left\langle-\epsilon, \f_{\theta}(x)\right\rangle\right) \\
&=\sup _{\theta}\left\langle\epsilon, \f_{\theta}(x)\right\rangle+\sup _{\theta}\left\langle-\epsilon, \f_{\theta}(x)\right\rangle.
\end{align*}
Taking the expectation over $\epsilon$ (and noting that $\epsilon$ and $-\epsilon$ have the same distribution) we get the desired conclusion.
\end{proof} 

\subsection{Proof of Lemma \ref{lem:contraction}}\label{proof:contraction1}
\begin{proof}
\begin{align*}
\Radm(\phi_P \circ \mathcal{P} \cdot \phi_Q \circ \mathcal{Q}) = \frac{1}{m} \E_{\epsilon \sim U(\{-1,1\}^m)}  \left [ \sup_{(p,q) \in \mathcal{P} \times \mathcal{Q}} \sum_{i=1}^m \epsilon_i \cdot \phi_P (p(\x_i))\phi_Q (q(\y_i))  \right ].
\end{align*}

We explicitly open up the expectation on $\epsilon_1$ to get,
\begin{align*}
&=  \frac{1}{2m} \cdot  \E_{(\epsilon_2,\ldots,\epsilon_m) \sim U(\{-1,1\}^{m-1})} \Bigg[ \sup_{f,q \in \mathcal{P}\times \mathcal{Q}} \left (  \phi_P (p(\x_1))\phi_Q(q(\y_1)) + \sum_{i=2}^m \epsilon_i \cdot \phi_P (p(\x_i)\phi_Q(q(\y_i))) \right ) +\\
&\quad \quad \quad \quad\quad \quad \quad \quad\quad \quad \quad \quad \sup_{f',q' \in \mathcal{P}\times \mathcal{Q}} \left ( - \phi_P (p'(\x_1))\phi_Q(q'(\y_1)) + \sum_{i=2}^m \epsilon_i \cdot \phi_P (p'(\x_i))\phi_Q(q'(\y_i)) \right)\Bigg]\\
&\leq \frac{1}{2m} \cdot  \E_{(\epsilon_2,\ldots,\epsilon_m) } \Bigg [ \sup_{((p,q),(p',q')) \in (\mathcal{P} \times \mathcal{Q} )^2} \Bigg(  \abs{ \phi_P (p(\x_1))\phi_Q(q(\y_1)) - \phi_P (p'(\x_1))\phi_Q(q'(\y_1)) }  \\
& \quad \quad\quad \quad\quad \quad\quad \quad\quad \quad + \sum_{i=2}^m \epsilon_i \left (\phi_P (p(\x_i))\phi_Q (q(\y_i))  + \phi_P (p'(\x_i)) \phi_Q(q'(\y_i)) \right ) \Bigg ) \Bigg]
\end{align*}

By invoking the the assumption~\ref{asm:activ} we get,

\begin{align*}
&\leq \frac{1}{2m} \cdot  \E_{(\epsilon_2,\ldots,\epsilon_m) } \Bigg [ \sup_{((p,q),(p',q')) \in (\mathcal{P} \times \mathcal{Q} )^2} \Bigg(  L \cdot \abs{p(\x_1)q(\y_1)  - p'(\x_1)q'(\y_1)}  \\
& \quad \quad\quad \quad\quad \quad\quad \quad\quad \quad + \sum_{i=2}^m \epsilon_i \left (\phi_P (p(\x_i))\phi_Q (q(\y_i))  + \phi_P (p'(\x_i)) \phi_Q(q'(\y_i)) \right ) \Bigg ) \Bigg]
\end{align*}

Now we invoke the fact that inside a supremum, an absolute value function is redundant, since anyway the higher combination will get picked. Thus we can rearrange the above to get,
\begin{align*}
&\leq   \frac{1}{2m} \cdot  \E_{(\epsilon_2,\ldots,\epsilon_m) } \Bigg[ \sup_{(p,q) \in \mathcal{P} \times \mathcal{Q}} \left (  L \cdot p(\x_1)q(\y_1)) + \sum_{i=2}^m \epsilon_i \cdot \phi_P (p(\x_i))\phi_Q(q(\y_i)) \right ) \\
&\quad \quad \quad \quad \quad \quad \quad \quad \quad + \sup_{(p',q') \in \mathcal{P} \times \mathcal{Q}} \left ( - L \cdot p'(\x_1)q'(\y_1) + \sum_{i=2}^m \epsilon_i \cdot \phi_P (p'(\x_i))\phi_Q(q'(\y_i)) \right ) \Bigg]\\
&\leq   \frac{1}{m} \cdot  \E_{\epsilon } \Bigg [ \sup_{(p,q) \in \mathcal{P} \times \mathcal{Q}} \left (  \epsilon_1 \cdot L \cdot p(\x_1)q(\y_1) + \sum_{i=2}^m \epsilon_i \cdot \phi_P (p(\x_i))\phi_Q(q(\y_i)) \right ) \Bigg ]
\end{align*}

Re-iterating the above argument for $i=2,3,,\ldots,m$ we get,
\begin{align*}
&= \frac{1}{m} \E_{\epsilon \sim U(\{-1,1\}^m)}  \sup_{(p,q) \in \mathcal{P} \times \mathcal{Q}}  \left [\sum_{i=1}^m \epsilon_i L \cdot p(\x_i)q(\y_i)  \right ] \\
&= L\Radm(\mathcal{P} \cdot \mathcal{Q}).
\end{align*} 
\end{proof}

\subsection{A Contraction Lemma with Biased Absolute Functions} 

Suppose $\gF$ is  set of functions with a common domain $D$ and a common range space, and $B$ is a subset of that range space. Then we denote a new function space $\gF + B \coloneqq \{x \mapsto f(x) + \vb, ~\forall x \in D \mid f \in \gF, \vb \in B \}$.

\begin{lemma}\label{lem:contraction2} 
Let $\mathcal{P}$ and $\cG$ be set of functions valued in $\R$ closed under negation. Given points $\{\x_i \mid i=1,\ldots,m \}$, $\{\y_i \mid i=1,\ldots,m \}$ in the domain of the functions then we have the following inequality of Rademacher complexities - where both the sides are being evaluated on this same set of points, 

\[ \Radm(|\mathcal{P}+B_1| \cdot |\cG+B_2|) \leq \Radm(\mathcal{P} \cdot \cG)+|B_2| \Radm(\mathcal{P}) +|B_1| \Radm(\cG)  \]
\end{lemma}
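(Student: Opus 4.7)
The plan is to factor $|f+B_1|\cdot|g+B_2| = |(f+B_1)(g+B_2)|$ and proceed in two steps: an absolute-value contraction, followed by expansion plus sub-additivity of the supremum. Introduce the auxiliary product class
\[\mathcal{P} \coloneqq \bigl\{(x,y) \mapsto (f(x)+B_1)(g(y)+B_2) \,\bigm|\, f \in \cF,\, g \in \cG \bigr\},\]
so that $|\cF+B_1|\cdot|\cG+B_2|$ coincides pointwise with $|\mathcal{P}|$, and hence $\Radm(|\cF+B_1|\cdot|\cG+B_2|) = \Radm(|\mathcal{P}|)$.

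The first main step is to establish $\Radm(|\mathcal{P}|) \leq \Radm(\mathcal{P})$ by adapting the $\epsilon_1$-peeling that appears in the proof of Lemma \ref{lem:contraction}. Opening the expectation on $\epsilon_1$ rewrites $\frac{1}{m}\E_\epsilon \sup_p \sum_i \epsilon_i |p(\x_i,\y_i)|$ as $\frac{1}{2m}\E_{\epsilon_{2:m}} \sup_{p,p'}\{(|p|-|p'|)(\x_1,\y_1) + \sum_{i\geq 2}\epsilon_i(|p(\x_i,\y_i)| + |p'(\x_i,\y_i)|)\}$. Applying $||a|-|b|| \leq |a-b|$ to the leading term, and observing that the sum over $i \geq 2$ is invariant under the swap $p \leftrightarrow p'$ while the leading term flips sign under that swap, lets one drop the outer absolute value inside the sup (the same symmetry used in Lemma \ref{lem:contraction}), producing $\epsilon_1 p(\x_1,\y_1)$ in place of the absolute-valued term at index~$1$. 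Iterating this across $i=1,\ldots,m$ strips every absolute value and yields $\Radm(|\mathcal{P}|) \leq \Radm(\mathcal{P})$.

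The second step uses the pointwise expansion $(f(x)+B_1)(g(y)+B_2) = f(x)g(y) + B_2 f(x) + B_1 g(y) + B_1 B_2$. The constant $B_1 B_2$ drops out because $\E\sum_i \epsilon_i = 0$. Sub-additivity of the supremum then bounds $\Radm(\mathcal{P})$ by $\Radm(\cF\cdot\cG) + \frac{1}{m}\E \sup_f \sum_i \epsilon_i B_2 f(\x_i) + \frac{1}{m}\E \sup_g \sum_i \epsilon_i B_1 g(\y_i)$. Absorbing the sign of $B_j$ into $\eta_i \coloneqq \operatorname{sgn}(B_j)\epsilon_i$, which is itself Rademacher distributed, converts the latter two summands into $|B_2|\,\Radm(\cF)$ and $|B_1|\,\Radm(\cG)$ respectively, giving the claimed inequality.

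The main obstacle I anticipate is the clean execution of the contraction $\Radm(|\mathcal P|) \leq \Radm(\mathcal P)$: while Ledoux--Talagrand would give this for free for $\phi(t)=|t|$, since $\mathcal P$ is indexed by pairs $(f,g)$ the most transparent route is the explicit peeling above, which requires carefully tracking the sign-flip symmetry of the residual sum under $p\leftrightarrow p'$.
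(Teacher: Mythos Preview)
Your proposal is correct and uses the same core ingredients as the paper—the $\epsilon_1$-peeling together with the reverse triangle inequality $||a|-|b||\le|a-b|$—but you organise them differently. The paper runs both the contraction and the algebraic expansion \emph{inside} each peeling step: it bounds $|f+B_1||g+B_2|-|f'+B_1||g'+B_2|$ by $|fg-f'g'|+|B_2||f-f'|+|B_1||g-g'|$ via triangle inequality, then invokes closure of $\cF,\cG$ under negation to drop those three absolute values simultaneously before recombining and iterating. Your two-stage route (first strip the outer absolute value on $\mathcal P$ via standard Ledoux--Talagrand contraction, then expand $(f+B_1)(g+B_2)$ and use sub-additivity of the supremum) is cleaner and more modular; in particular, your sign-absorption argument $\eta_i=\operatorname{sgn}(B_j)\epsilon_i$ shows that the closure-under-negation hypothesis is in fact not needed, which is a small sharpening over the paper's statement.
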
 

\begin{proof}
From the definition of Rademacher complexity it follows that,
\begin{align*}
\Radm(|\mathcal{P}+B_1| \cdot |\mathcal{Q}+B_2|) &= \frac{1}{m} \E_{\epsilon \sim U(\{-1,1\}^m)}  \sup_{p,q \in \mathcal{P} \times \mathcal{Q}}  \left [\sum_{i=1}^m \epsilon_i \cdot |p(\x_i)+B_1||q(\y_i)+B_2|  \right ].
\end{align*}

We explicitly open up the expectation on $\epsilon_1$ to get,

\begin{align}
&=  \frac{1}{2m} \cdot  \E_{(\epsilon_2,\ldots,\epsilon_m) \sim U(\{-1,1\}^{m-1})} \Bigg[ \sup_{p,q \in \mathcal{P}\times \mathcal{Q}} \left(  |p(\x_1)+B_1||q(\y_1)+B_2| + \sum_{i=2}^m \epsilon_i \cdot \phi_P (p(\x_i)) \phi_Q(q(\y_i)) \right) +\nonumber\\
&\quad \quad \quad \quad\quad \quad  \quad \quad \quad \sup_{p',q' \in \mathcal{P}\times \mathcal{Q}} \left ( -|p'(\x_1)+B_1||q'(\y_1)+B_2| + \sum_{i=2}^m \epsilon_i \cdot \phi_P (p'(\x_i))\phi_Q(q'(\y_i)) \right )\Bigg] \label{eq:lemh:1}\\
&\leq \frac{1}{2m} \cdot  \E_{(\epsilon_2,\ldots,\epsilon_m) } \Bigg [ \sup_{p,p',q,q' \in \mathcal{P} \times \mathcal{P} \times \mathcal{Q} \times \mathcal{Q}} \Bigg(  \abs{p(\x_1)q(\y_1)  - p'(\x_1)q'(\y_1)} +|B_2| \abs{p(\x_1)  - p'(\x_1)}+|B_1|\abs{q(\y_1)  - q'(\y_1)}\nonumber\\
& \quad\quad \quad\quad \quad\quad \quad\quad \quad + \sum_{i=2}^m \epsilon_i \left (\phi_P (p(\x_i))\phi_Q (q(\y_i))  + \phi_P (p'(\x_i)) \phi_Q(q'(\y_i)) \right ) \Bigg ) \Bigg] \label{eq:lemh:2}\\
&=   \frac{1}{2m} \cdot  \E_{(\epsilon_2,\ldots,\epsilon_m) } \Bigg[ \sup_{p,q\in \mathcal{P} \times \mathcal{Q}} \left (  p(\x_1)q(\y_1) +|B_2|p(\x_1)+|B_1|q(\y_1)+ \sum_{i=2}^m \epsilon_i \cdot \phi_P (p(\x_i))\phi_Q(q(\y_i)) \right ) \nonumber\\
&\quad \quad \quad \quad \quad \quad \quad + \sup_{p',q'\in \mathcal{P} \times \mathcal{Q}} \left (- p'(\x_1)q'(\y_1)-|B_2|p'(\x_1)-|B_1|q'(\y_1)+\sum_{i=2}^m \epsilon_i \cdot \phi_P (p'(\x_i))\phi_Q(q'(\y_i)) \right ) \Bigg]\label{eq:lemh:3}\\
&\leq   \frac{1}{m} \cdot  \E_{\epsilon  } \Bigg [ \sup_{f,g \in \mathcal{P} \times \mathcal{Q}} \left (  \epsilon_1 \cdot \big(p(\x_1)q(\y_1)+|B_2|p(\x_1)+|B_1|q(\y_1) \big)+ \sum_{i=2}^m \epsilon_i \cdot \phi_P(p(\x_i))\phi_Q(q(\y_i)) \right ) \Bigg ] \label{eq:lemh:4}
\end{align}

Iterating this, we get
\begin{align*}
&\leq \frac{1}{m} \E_{\epsilon \sim U(\{-1,1\}^m)}  \sup_{p,q \in \mathcal{P} \times \mathcal{Q}}  \left [\sum_{i=1}^m \epsilon_i \cdot (p(\x_i)q(\y_i)+ |B_2|p(\x_i)+|B_1|q(\y_i) \right ] \\
&\leq \Radm(\mathcal{P} \cdot \mathcal{Q})+|B_2| \Radm(\mathcal{P}) +|B_1| \Radm(\mathcal{Q}).
\end{align*}
In equation \ref{eq:lemh:2}, we have used the triangle inequality, followed by
$$\bigg | \big|p(\x_1)+B_1\big|\cdot\big|q(\y_1)+B_2\big|- \big|p'(\x_1)+B_1\big|\cdot\big|q'(\y_1)+B_2\big|\bigg|  \leq  \bigg|(p(\x_1)+B_1)(q(\y_1)+B_2)-(p'(\x_1)+B_1)(q'(\y_1)+B_2)\bigg|$$
In equation \ref{eq:lemh:3}, we could open up the $| \cdot |$ since the supremum would be positive. This follows from the fact that $\mathcal{P}$ and $\mathcal{Q}$ are closed under multiplication by $-1$, which implies $ \sup_{f\in F} f = \sup_{f\in -F} f = \sup_{f\in F} -f = \sup_{f\in F} |f|$

\end{proof}


\section{Converting $\relu-$DeepONets to $\text{abs}-$DeepONets}\label{rem:absrelu}

Firstly, we recall that the map $\R^q \ni \x \mapsto \x \in \R^q$ is an exact depth $2$ $\relu$ net -- one which passes every coordinate of the input through the $\relu$ net $\R \ni z \mapsto \max\{0,z\} - \max\{0,-z\}  \in \R$.  Using this, given any $\relu$ DeepONet, we can symmetrize the depths between the branch and the trunk by attaching the required number of identity computing layers at the input to the shorter side. 

Secondly, for typical DeepONet experiments, one can assume that the the set of all possible input data is bounded. Combining this with the assumption of boundedness of the allowed matrices, we conclude that $\exists ~{\cal B} > 0$ s.t the input to any $\relu$ gate in any $\DON$ in the given class is bounded by ${\cal B}$. Now we observe that $\forall \abs{z} \leq {\cal B}$, we can rewrite the map $z \mapsto \max\{0,z\}$ as, $z \mapsto \frac{1}{2} \abs{z} + \frac{1}{4} \abs{z+ {\cal B}} - \frac{1}{4} \abs{z- {\cal B}}$. 

Hence, doing the above replacement at every gate we can rewrite any $\relu$ DeepONet (without biases) as a DeepONet using only absolute value activations but with biases -- and computing the same function on the same bounded domain, at the cost of increasing the size of the branch and the trunk net by a factor of $3$. Thirdly, a similar result as in Lemma \ref{lem:contraction} continues to hold for this setup as given in Lemma \ref{lem:contraction2}. Note that, Lemma \ref{lem:contraction2} bounds the Rademacher complexity of a $\DON$ class with the branch and trunk depths  $(k,k)$ by a linear sum of that of $(k-1,k-1)$ and sum of a $(k,0)$ and a $(0,k)$ DeepONet. While the first term can be recursed on again (using identical techniques as used to prove Theorem \ref{thm:absdon}), for the last two one can invoke Theorem $1$ of \cite{rgs}. 

Thus, we observe that following the same arguments as in the proof for Theorem \ref{thm:absdon} we can derive an  analogous bound for arbitrary $\relu$ $\DON$ classes - but with twice the depth of the DeepONet number of extra terms in the R.H.S. These extra terms would come in pairs -- each pair consisting of a Rademacher complexity bound on a standard net, one from the branch side and one from the trunk side and of decreasing depths. 

\section{Choosing $\tilde{\C}_{-k,-k}$}\label{sec:simplereg}

Defining $\Bigg [ X_{j_1,j_2} \Bigg ]_{\substack{j_1 = 1,\ldots,b_{-k} \\ k_1 = 1,\ldots,t_{-k}}} \coloneqq \mX \in \R^{b_{-k} \times t_{-k}}$ s.t $X_{j_1,j_2} \coloneqq \norm{\B_{n-k,j_1}}\cdot \norm{\T_{n-k,j_2}}$, we can simplify as follows the expression defining $\C_{-k,-k}$ in equation \ref{reg},

\begin{align*}
&\sup_{\substack{(\v,\w) \in S^{-1+b_{-k}} \times S^{-1+t_{-k}}}} \sum_{j_1 = 1}^{b_{-k}} \sum_{j_2 = 1}^{t_{-k}} \abs{(\v \w^\top)_{j_1,j_2}} \norm{\B_{n-k,j_1}} \norm{\T_{n-k,j_2}} = \sup_{\substack{(\v,\w) \in S^{-1+b_{-k}} \times S^{-1+t_{-k}}}} \sum_{j_1 = 1}^{b_{-k}} \sum_{j_2 = 1}^{t_{-k}} \abs{v_{j_1}}  \norm{\B_{n-k,j_1}} \norm{\T_{n-k,j_2}} \abs{w_{j_2}}\\
&= \sup_{\substack{(\v,\w) \in S^{-1+b_{-k}} \times S^{-1+t_{-k}}}} \sum_{j_1 = 1}^{b_{-k}} \sum_{j_2 = 1}^{t_{-k}} \abs{v_{j_1}} \cdot  \mX_{j_1,j_2} \cdot \abs{w_{j_2}} = \sup_{\substack{(\v,\w) \in S^{-1+b_{-k}} \times S^{-1+t_{-k}}}} \tilde{\v}^\top \mX \tilde{\w}
\end{align*}

In the last line above we defined vectors $\tilde{\v}$ and $\tilde{\w}$ of appropriate dimensions s.t $\tilde{v}_{i} = \abs{v_i}$ and  $\tilde{w}_{i} = \abs{w_i}$. Then note that,  $\mathbf{\tilde{v}}^\top\mathbf{X} \mathbf{\tilde{w}} \leq \norm{\mathbf{\tilde{v}}}_2 \norm{\mathbf{{X}}\mathbf{\tilde{w}}}_2 
    \leq \norm{\mathbf{\tilde{v}}}_2 \norm{\mathbf{{X}}}\norm{\mathbf{\tilde{w}}}_2$

In above $\norm{\mathbf{{X}}}$ is the spectral norm of ${\mathbf{{X}}}.$ Therefore we have the following upperbound, 

\[\sup_{\v,\w} \norm{\mathbf{\tilde{v}}}_2 \norm{\mathbf{{X}}}\norm{\mathbf{\tilde{w}}}_2\leq \sup_{\v} \norm{\mathbf{\tilde{v}}}_2\cdot \sup_{\w}\norm{\mathbf{\tilde{w}}}_2 \cdot\norm{\mathbf{X}}\] 

Since $\norm{\tilde{\v}} = \norm{\v} \leq 1$ and $\norm{\tilde{\w}} = \norm{\w} \leq 1$ we have, 

\[ \sup_{\substack{(\v,\w) \in S^{-1+b_{-k}} \times S^{-1+t_{-k}}}} \tilde{\v}^\top \mX \tilde{\w} \leq \norm{ \Bigg [ \norm{\B_{n-k,j_1}}\cdot \norm{\T_{n-k,j_2}} \Bigg ]_{\substack{j_1 = 1,\ldots,b_{-k} \\ k_1 = 1,\ldots,t_{-k}}}}\] 

Thus it follows that the RHS of the above inequality gives an intuitive candidate for the quantity $\tilde{\C}_{-k,-k}$. 

\section{Behaviour of Rademacher Bound for $\ell_2$-loss }

\begin{figure}[!hb]
    \centering
    \includesvg[width=0.5\textwidth]{Sections/images/l2_500_4000.svg}
    \caption{ The above plot shows the behaviour of the measured generalization error with respect to $\frac{\C_{3,2} \, \tilde{\C}_{-2,-2}}{\sqrt{m}}$ for training DeepONets to solve Burgers' PDE using the empirical loss as given in equation \ref{eq:exploss}, specialized to the $\ell_2$ loss and for the branch and the trunk nets being of depth $3$. Each point is labelled by the number of training data used in that experiment.}
    \label{fig:l2_500_4000}
\end{figure}

\section{Link to Code}

\href{https://github.com/Dibyakanti/Towards-Size-Independent-Generalization-Bounds-for-Deep-Operator-Nets}{Here is the link} to the GitHub repository containing our code for training a DeepONet for the Heat and Burgers' P.D.Es.



\clearpage
\section{Plot Showing the Evolution of Solution to the 2D Heat P.D.E. with Time} \label{app:heatplot}

\begin{figure}[!h]
    \centering
    \begin{subfigure}[h]{0.72\textwidth}
        \includegraphics[width=\textwidth]{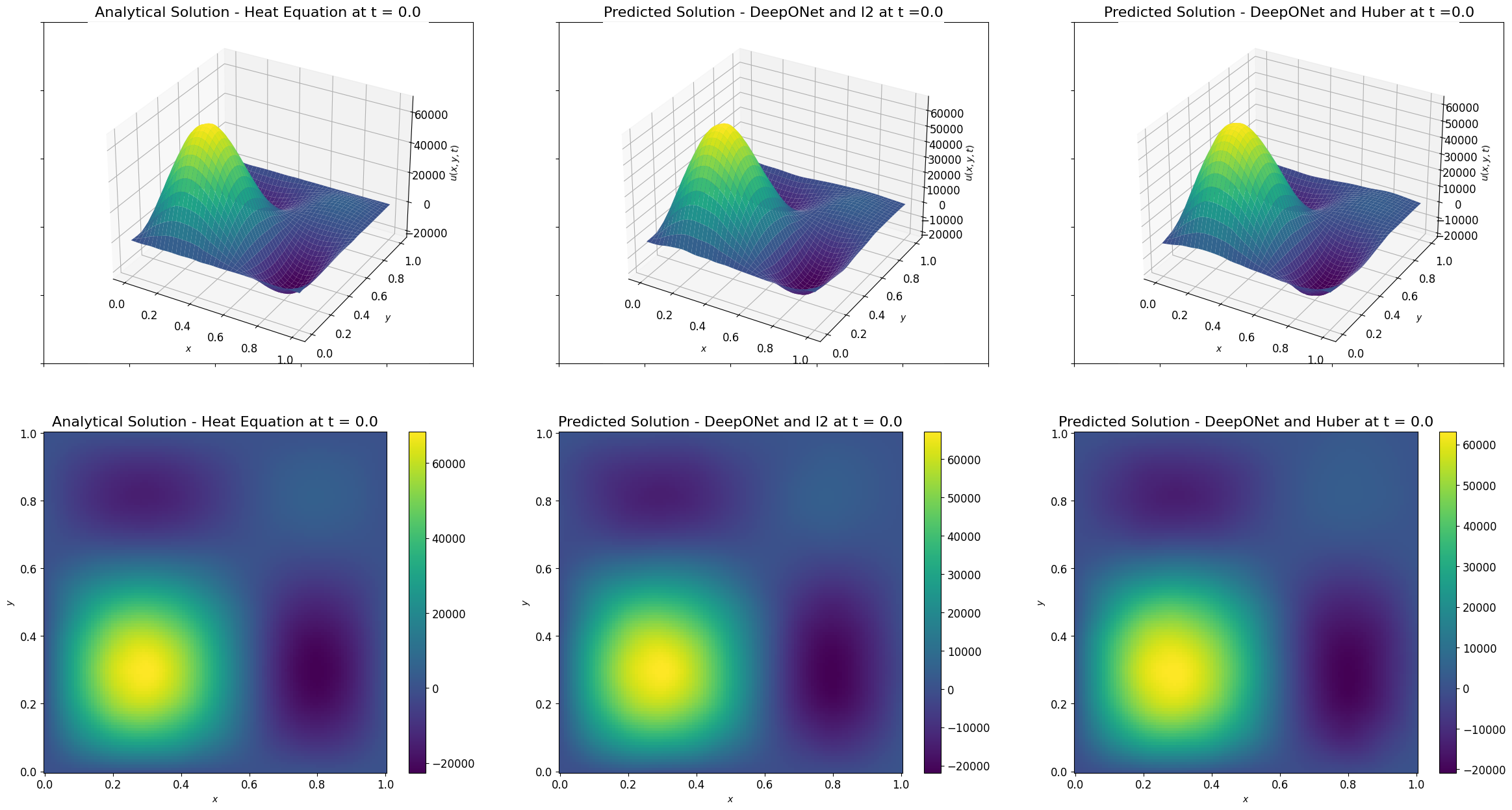}
        \label{fig:apheat-1}
    \end{subfigure}
    \hfill
    \begin{subfigure}[h]{0.72\textwidth}
        \includegraphics[width=\textwidth]{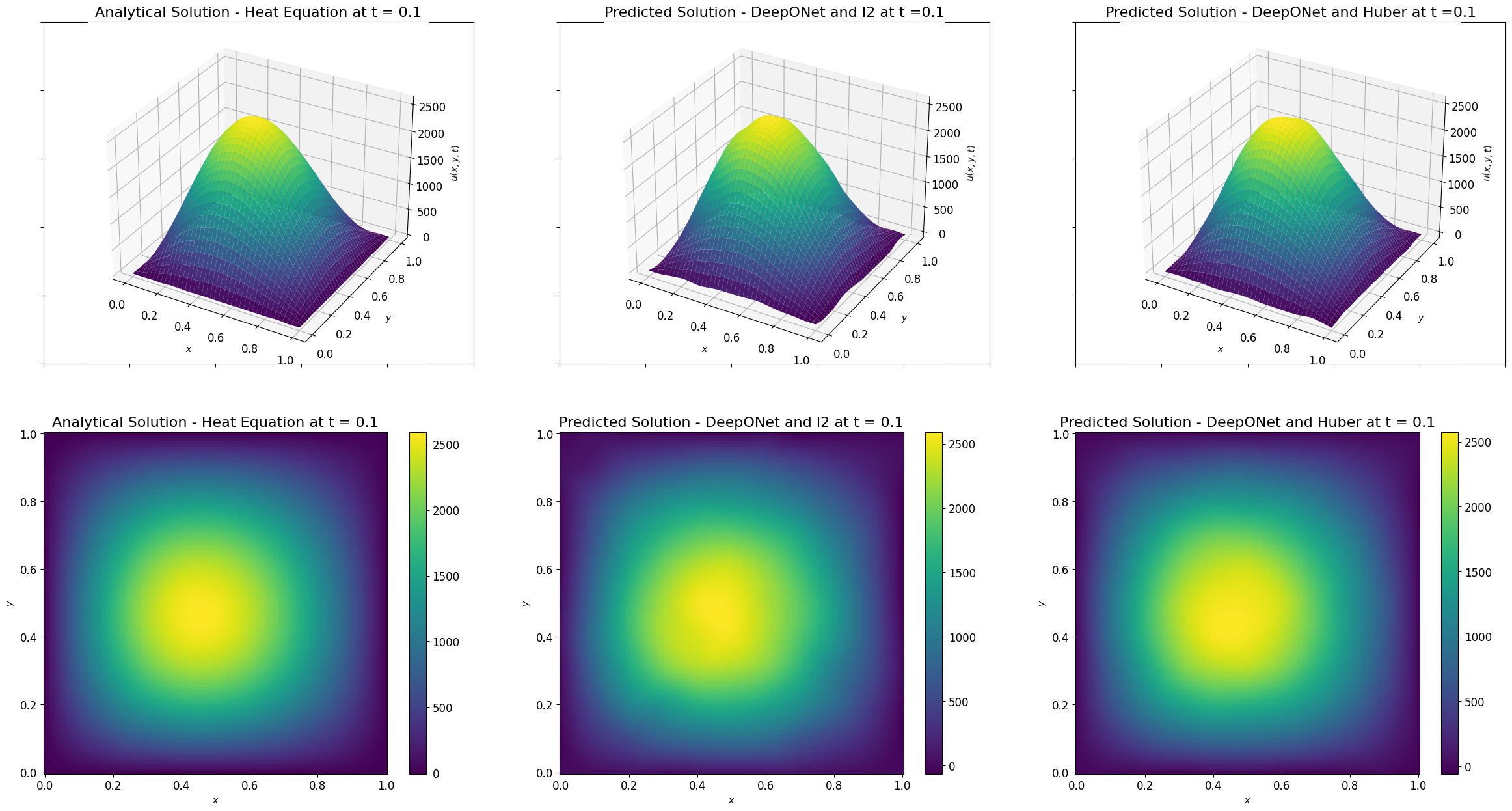}
        \label{fig:apheat-2}
    \end{subfigure}
    \hfill
    \begin{subfigure}[h]{0.72\textwidth}
        \includegraphics[width=\textwidth]{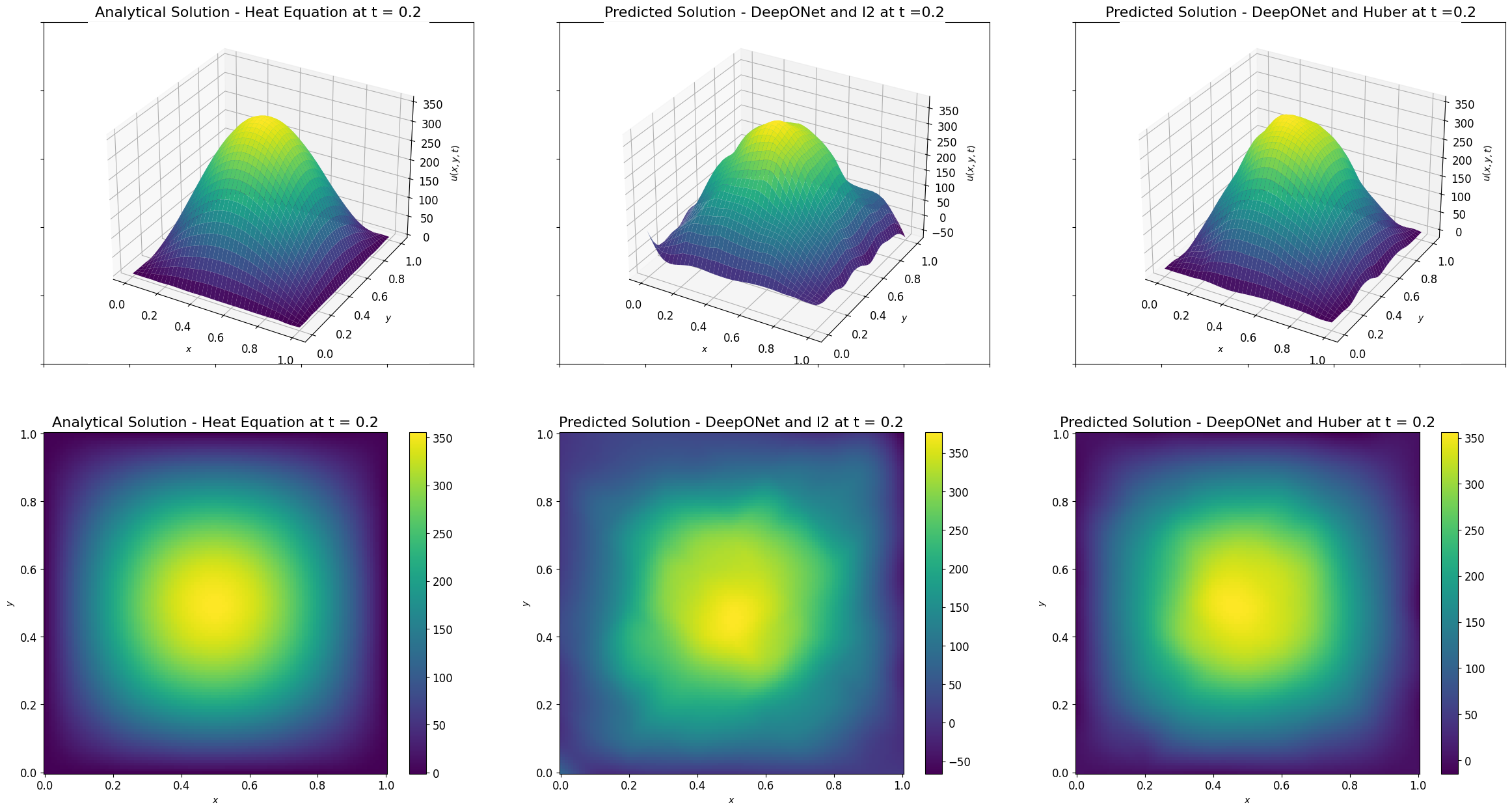}
        \label{fig:apheat-3}
    \end{subfigure}
    \vspace{-1.00em}
    \caption{This plot demonstrates the solution to 2D Heat P.D.E. at different times. In each figure, we have shown the analytical solution first followed by the prediction from the DeepONet trained with $\ell_2$ and Huber loss at $\delta = 0.25$. Here, the DeepONets we used have ReLU activation functions and include bias terms.}
    \label{fig:plot_apheat}
\end{figure}

\end{document}